
\documentclass{article}

\usepackage{graphicx} 
\usepackage{subfigure} 

\usepackage{natbib}

\usepackage{algorithm}
\usepackage{algorithmic}

\usepackage{hyperref}


\usepackage[accepted]{icml2013}


\icmltitlerunning{Online Learning under Delayed Feedback}


\usepackage{amssymb}
\usepackage{url}
\usepackage{upgreek}
\usepackage{amsmath}
\usepackage{amsthm}
\usepackage{algorithm}
\usepackage{algorithmic}

\usepackage{nccmath}
    
\usepackage{float}
\floatstyle{boxed} 
\restylefloat{figure}


\usepackage{amssymb}
\usepackage{amsmath}
\usepackage{xspace}


\newcommand{\ExpVal}[1]{\mathbb{E}\left[#1\right]}
\newcommand{\EE}[1]{\mathbb{E}\left[ #1 \right]}
\newcommand{\Variance}[1]{\mathbb{\sigma}^2\left[ #1 \right]}
\newcommand{\Prob}[1]{\mathbb{P}\left\lbrace #1 \right\rbrace}
\newcommand{\Event}[1]{\mathbb{I}\left\lbrace #1 \right\rbrace}

\newcommand{\SimpleEvent}{\mathbb{I}}

\newcommand{\ceiling}[1]{\left\lceil #1 \right\rceil }
\newcommand{\floor}[1]{\left\lfloor #1 \right\rfloor }

\newcommand{\Naturals}{\mathbb{N}}

\newcommand{\Reals}{\mathbb{R}}


\newcommand{\ActionSet}{\mathcal{A}}
\newcommand{\OutcomeSet}{\mathcal{B}}
\newcommand{\RewardSet}{\mathcal{R}}
\newcommand{\XSet}{\mathcal{X}}
\newcommand{\FeedbackSet}{\mathcal{H}}

\newcommand{\kldiv}{d}

\newcommand{\Base}{\textsc{Base}\xspace}

\newcommand{\StochBlackBoxAlg}{QPM-D\xspace}


\newcommand{\thtimes}[1]{{#1}^{\text{th}}}

\newtheorem{theorem}{Theorem}

\newtheorem{lemma}[theorem]{Lemma}
\newtheorem{corollary}[theorem]{Corollary}

\newcommand{\eqdef}{\doteq}

\begin{document} 

\twocolumn[
\icmltitle{Online Learning under Delayed Feedback}

\icmlauthor{Pooria Joulani}{pooria@ualberta.ca}
\icmlauthor{Andr\'as Gy\"orgy}{gyorgy@ualberta.ca}
\icmlauthor{Csaba Szepesv\'ari}{szepesva@ualberta.ca}
\icmladdress{Dept. of Computing Science, University of Alberta, 
            Edmonton, AB, T6G 2E8 CANADA}            

\icmlkeywords{online learning, partial information feedback, delayed feedback}

\vskip 0.3in
]

\begin{abstract}
Online learning with delayed feedback has received increasing attention recently due to its several applications in 
distributed, web-based learning problems. In this paper we provide a systematic study of the topic, and analyze the
effect of delay on the regret of online learning algorithms. 
Somewhat surprisingly, it turns out that delay increases the regret in a multiplicative way in adversarial problems, and in an additive 
way in stochastic problems. We give meta-algorithms that transform, in a black-box fashion, 
algorithms developed for the non-delayed case into ones that can handle the presence of delays in the feedback loop.
Modifications of the well-known UCB algorithm are also developed for the bandit problem with delayed feedback, with
the advantage over the meta-algorithms that they can be implemented with lower complexity.
\end{abstract}

\section{Introduction}
In this paper we study sequential learning when the feedback about the predictions made by the forecaster are delayed. This is the case, for example, in web advertisement, where the information whether a user has clicked on a certain ad may come back to the engine in a delayed fashion: after an ad is selected, while waiting for the information if the user clicks or not, the engine has to provide ads to other users. Also, the click information may be aggregated and then periodically sent to the module that decides about the ads, resulting in further delays.
\citep{Li:2010,Dudik:2011}. Another example is parallel, distributed learning, where propagating information among nodes causes delays \citep{Agarwal:2011}. 

While online learning has proved to be successful in many machine learning problems and is applied in practice in situations where the feedback is delayed, the theoretical results for the non-delayed setup are not applicable when delays are present. 
Previous work concerning the delayed setting focussed on specific online learning settings and delay models (mostly with constant delays). Thus, a comprehensive understanding of the effects of delays is missing. In this paper, we provide a systematic study of online learning problems with delayed feedback. We consider the \emph{partial monitoring setting}, which covers all settings previously considered in the literature, extending, unifying, and often improving upon existing results.
In particular, we give general meta-algorithms that transform, in a black-box fashion, algorithms developed for the non-delayed case into algorithms that can handle delays efficiently. We analyze how the delay effects the regret of the algorithms. One interesting, perhaps somewhat surprising, result is that the delay inflates the regret in a multiplicative way in adversarial problems, while this effect is only additive in stochastic problems. While our general meta-algorithms are  useful, their time- and space-complexity may be unnecessarily large. To resolve this problem, we work out modifications of variants of the UCB algorithm \citep{Auer:2002} for stochastic bandit problems with delayed feedback that have much smaller complexity than the black-box algorithms.

The rest of the paper is organized as follows. The problem of online learning with delayed feedback is defined in Section~\ref{sec:def}. The adversarial and stochastic problems are analyzed in Sections~\ref{sec:adversarial} and~\ref{sec:stochastic}, while the modification of the UCB algorithm is given in Section~\ref{sec:UCB}. Some proofs, as well as results about the KL-UCB algorithm \citep{Garivier:2011} under delayed feedback,  are provided in the appendix.
\section{The delayed feedback model}
\label{sec:def}
We consider a general model of online learning, which we call the partial monitoring problem with side information.
In this model, 
the forecaster (decision maker) has to make a sequence of predictions (actions), possibly based on some side information, and for each prediction it receives some reward and feedback, where the feedback is delayed. 
More formally, given a set of possible side information values $\XSet$, a set of possible predictions $\ActionSet$, a set of reward functions 
$\RewardSet \subset \{r : \XSet \times \ActionSet \to  \Reals\}$, and a set of possible feedback values $\FeedbackSet$, at each time instant 
$t=1,2,\ldots$, the forecaster receives some side information $x_t \in \XSet$; then, possibly based on the side information, the forecaster predicts some value $a_t \in \ActionSet$ while the environment simultaneously chooses a reward function $r_t \in \RewardSet$; finally, the forecaster receives reward $r_t(x_t,a_t)$ and some time-stamped feedback set $H_t \subset \mathbb{N} \times \FeedbackSet$. In particular, each element of $H_t$ is a pair of time index and a feedback value, the time index indicating the time instant whose decision the associated feedback corresponds to.

Note that the forecaster may or may not receive any direct information about the rewards it receives (i.e., the rewards may be hidden).
In standard online learning, the feedback-set $H_t$ is a singleton and the feedback in this set depends on $r_t,a_t$. In the delayed model, however, the feedback that concerns the decision at time $t$ is received at the end of the time period $t+\tau_t$, \emph{after} the prediction is made, i.e., it is delayed by $\tau_t$ time steps. Note that $\tau_t\equiv 0$ corresponds to the non-delayed case. Due to the delays multiple feedbacks may arrive at the same time, hence the definition of $H_t$.

\begin{figure}
{\bf Parameters:}
Forecaster's prediction set $\ActionSet$, set of outcomes $\OutcomeSet$, side information set $\XSet$, reward function $r: \XSet \times \ActionSet \times \OutcomeSet \to \Reals$, feedback function $h: \XSet \times \ActionSet \times \OutcomeSet \to \FeedbackSet$, time horizon $n$ (optional).\\
At each time instant $t = 1,2,\dots,n$:
\begin{enumerate}
\item The environment chooses some side information  $x_t \in \XSet$ and an outcome $b_t \in \OutcomeSet$.
\item The side information $x_t$ is presented to the forecaster, who makes a prediction $a_t \in \ActionSet$, which results in the reward $r(x_t,a_t,b_t)$ (unknown to the forecaster).
\item The feedback $h_t = h(x_t,a_t,b_t)$ is scheduled to be revealed after $\tau_t$ time instants.
\item The agent observes $H_t = \{(t',h_{t'}) : t' \le t, t' + \tau_{t'} = t\}$, i.e., all the feedback values scheduled to be revealed at time step $t$, together with their timestamps.
\end{enumerate}
\caption{Partial monitoring under delayed, time\-stamped feedback.}
\label{fig:DelayedOnlineLearningProtocol}
\end{figure}

The goal of the forecaster is to maximize its cumulative reward $\sum_{t=1}^n r_t(x_t,a_t)$ $(n\ge 1)$. 
The performance of the forecaster is measured relative to the best static strategy selected from some set $\mathcal{F} \subset \{ f\,|\, f: \XSet \to \ActionSet\}$ in hindsight. In particular, the forecaster's performance is measured through the \emph{regret},  defined by 
\[
R_n= \sup_{a\in \mathcal{F}} \sum_{t=1}^n r_t(x_t,a(x_t)) - \sum_{t=1}^n r_t(x_t, a_t).
\]
A forecaster is consistent if it achieves, asymptotically, the average reward of the best static strategy, that is $\EE{R_n}/n \to 0$, and we are interested in how fast the average regret can be made to converge to $0$.

The above general problem formulation includes most scenarios considered in online learning. 
In the full information case, the feedback is the reward function itself, that is, $\mathcal{H} = \mathcal{R}$ and
$H_t = \{(t,r_t)\})$ (in the non-delayed case).
In the bandit case, the forecaster only learns the rewards of its own prediction, i.e., $\mathcal{H} = \mathbb{R}$ and $H_t = \{(t,r_t(x_t,a_t))\}$. 
In the partial monitoring case, the forecaster is given a reward function
$r:\XSet\times\ActionSet\times\OutcomeSet \to \Reals$ and a feedback function $h:\XSet\times\ActionSet\times\OutcomeSet \to \FeedbackSet$, where $\OutcomeSet$ is a set of choices 
(outcomes) of the environment. 
Then, for each time instant the environment picks an outcome $b_t \in \OutcomeSet$, 
	and the reward becomes
	$r_t(x_t,a_t)=r(x_t,a_t,b_t)$, while $H_t = \{(t,h(x_t,a_t,b_t))\}$.
This interaction protocol is shown in Figure~\ref{fig:DelayedOnlineLearningProtocol} in the delayed case. Note that the bandit and full information problems can also be treated as special partial monitoring problems. Therefore, we will use this last formulation of the problem. When no stochastic assumption is made on how the sequence $b_t$ is generated, we talk about the adversarial model. In the stochastic setting we will consider the case when $b_t$ is a sequence of independent, identically distributed (i.i.d.) random variables. Side information may or may not be present in a real problem; in its absence $\XSet$ is a singleton set. 

Finally, we may have different assumptions on the delays. 
Most often, we will assume that $(\tau_t)_{t\ge 1}$ is an i.i.d. sequence, which is independent of the past predictions $(a_s)_{s \le t}$ of the forecaster. In the stochastic setting, we also allow the distribution of $\tau_t$ to depend on $a_t$.

Note that  the delays may change the order of observing the feedbacks, with the feedback of a more recent prediction being observed before the feedback of an earlier one.
\subsection{Related work}
The effect of delayed feedback has been studied in the recent years under different online learning scenarios and different assumptions on the delay.
A concise summary, together with the contributions of this paper, is given in Table~\ref{tbl:Contrib}.
\begin{table*}[ht]
\begin{center}
			\begin{tabular}{|c|c|c|c|}
			\hline
				& & { Stochastic Feedback} & { General (Adversarial) Feedback} \\
			\hline
				 & { } &  & \multicolumn{1}{l|}{{  L}}\\
				 & {No}& {$R(n) \le R'(n) +O(\ExpVal{\tau_t^2})$} & {$R(n) \le O(\tau_{const}) \times R'(n/\tau_{const}) $}\\ 
				& { Side}  & \multicolumn{1}{l|}{{ \cite{Agarwal:2011}}} & \multicolumn{1}{l|}{{\cite{Weinberger:2002}}} \\
			{ Full Info}  & {Info} & & \multicolumn{1}{l|}{{\cite{Langford:2009}}}\\
		 	 & &  & \multicolumn{1}{l|}{{\cite{Agarwal:2011}}}\\ \cline{2-4} 
				 & &\multicolumn{1}{l|}{{  L}} &\multicolumn{1}{l|}{{  L}} \\
			& { Side Info} & {$R(n) \le R'(n) + O(D^*)$} &{ $ R(n) \le O(\bar{D}) \times R'(n/\bar{D})$}\\
				 & { } &\multicolumn{1}{l|}{ { \cite{Mesterharm:2007} } }&\multicolumn{1}{l|}{{ \cite{ Mesterharm:2007}}}\\ \cline{2-4}
			\hline
				& { No Side }& { $R(n) \le C_1 R'(n) + C_2 \tau_{\max} \log(\tau_{\max})$ } & { $R(n) \le O(\tau_{const}) \times R(n/\tau_{const})$} \\
				 { Bandit} & { Info} & \multicolumn{1}{l|}{ \cite{Desautels:2012}} & \multicolumn{1}{l|}{\citep{Neu:2010}} \\   \cline{2-4} 
			{ Feedback}	& & & \\
			 & { Side Info} & {$R(n) \le R'(n) + O(\tau_{const} \sqrt{\log n})$} & \\
				 & { } & \multicolumn{1}{l|}{{ \cite{Dudik:2011} }} & \\ \hline
			{Partial} & \begin{tabular}[m]{c} No \\ Side Info \end{tabular} & $\mathbf{R_n \le  R'(n) + O(G^\ast_n)}$ &  $\mathbf{R_n \le \left(1+\ExpVal{G^\ast_n} \right)\times R'\left(\dfrac{n}{1+\ExpVal{G^\ast_n}} \right)}$ \\ \cline{2-4} 
			Monitoring & Side Info &  & $\mathbf{R_n \le \left(1+\ExpVal{G^\ast_n} \right)\times R'\left(\dfrac{n}{1+\ExpVal{G^\ast_n}} \right)}$
			\\ \hline
			\end{tabular}
			\caption{Summary of work on online learning under delayed feedback. $R(n)$ shows the (expected) regret in the delayed setting, while $R'(n)$ shows the (upper bound on) the (expected) regret in the non-delayed setting. $L$ denotes a matching lower bound. $D^*$ and $\bar{D}$ indicate the maximum and average \emph{gap}, respectively, where a gap is a number of consecutive time steps the agent does not get any feedback (in the adversarial delay formulation used by \citet{Mesterharm:2005,Mesterharm:2007}). The term $\tau_{const}$ indicates that the results are for constant delays only. For the work of \cite{Desautels:2012},  $C_1$ and $C_2$ are positive constants, with $C_1 > 1$, and $\tau_{\max}$ denotes the maximum delay. The results presented in this paper are shown in boldface, where 
	$\mathbf{G^\ast_t}$ is the maximum number of outstanding feedbacks during the first $t$ time-steps.
In particular,			
			$\mathbf{G^\ast_{n} }\le \tau_{max}$ when the delays have an upper bound $\tau_{max}$, and we show that $\mathbf{G^\ast_{n} = O\left(\ExpVal{\tau_t} + \sqrt{\ExpVal{\tau_t}\log n} + \log n \right)}$ when the delays $\tau_t$ are i.i.d. The new bounds for the partial monitoring problem are automatically applicable in the other, spacial, cases, and give improved results in most cases.}
			\label{tbl:Contrib}
\end{center}
		\end{table*}

To the best of our knowledge, \citet{Weinberger:2002} were the first to analyze the delayed feedback problem; they considered the adversarial 
full information setting with a fixed, known delay $\tau_{const}$. They showed that the minimax optimal solution is to run $\tau_{const}+1$ independent optimal predictors on 
the subsampled reward sequences: $\tau_{const}+1$ prediction strategies are used such that the $\thtimes{i}$ predictor is used at time instants $t$ with $\left( t \mod (\tau_{const}+1) \right)+1=i$.
This approach forms the basis of our method devised for the adversarial case (see Section~\ref{sec:adversarial}). \citet{Langford:2009} showed that under the usual conditions, 
a sufficiently slowed-down version of the mirror descent algorithm achieves optimal decay rate of the average regret.
~\citet{Mesterharm:2005,Mesterharm:2007} considered another variant of the full information setting, using an adversarial model on the delays in the label prediction setting, where the forecaster has to predict the label corresponding to a side information vector $x_t$. While in the full information online prediction problem \citet{Weinberger:2002}
showed that the regret increases by a multiplicative factor of $\tau_{const}$, in the work of \citet{Mesterharm:2005,Mesterharm:2007} the important quantity becomes the
maximum/average gap defined as the length of the largest time interval the forecaster does not receive feedback. \citet{Mesterharm:2005,Mesterharm:2007}  also
shows that the minimax regret in the adversarial case increases multiplicatively by the average gap, while it increases only in an additive fashion in the stochastic case, by the maximum gap.
~\citet{Agarwal:2011} considered the problem of online stochastic optimization and showed that, for i.i.d. random delays, the regret increases with an additive factor of order $\ExpVal{\tau^2}$.

Qualitatively similar results were obtained in the bandit setting. Considering a fixed and known delay $\tau_{const}$, 
~\citet{Dudik:2011} showed an additive $O(\tau_{const} \sqrt{\log n})$ penalty in the regret for the stochastic setting (with side information), while
~\cite{Neu:2010} showed a multiplicative regret for the adversarial bandit case. The problem of delayed feedback has also been studied for Gaussian process bandit optimization \citep{Desautels:2012}, resulting in a multiplicative increase in the regret that is independent of the delay and an additive term depending on the maximum delay.

In the rest of the paper we generalize the above results to the partial monitoring setting, extending, unifying, and often improving existing results.

\section{Black-Box Algorithms for Delayed Feedback}
\label{sec:balckbox}
In this section we provide black-box algorithms for the delayed feedback problem. We assume that there exists a base algorithm \Base for solving the prediction problem without delay.
We often do not specify the assumptions underlying the regret bounds of these algorithms, and assume that the problem we consider only differs from the original problem because of the delays. For example, in the adversarial setting, \Base\ may build on the assumption that the reward functions are selected in an oblivious or non-oblivious way (i.e., independently or not of the predictions of the forecaster). First we consider the adversarial case in Section~\ref{sec:adversarial}. Then in Section~\ref{sec:stochastic}, we provide tighter bounds for the stochastic case.

\subsection{Adversarial setting}
\label{sec:adversarial}
We say that a prediction algorithm \emph{enjoys a regret or expected regret bound} $f:[0,\infty) \to \Reals$ under the given assumptions in the non-delayed setting if (i) $f$ is nondecreasing, concave, $f(0)=0$; and (ii) 
$\sup_{b_1,\ldots,b_n \in \OutcomeSet} R_n \le f(n)$ or, respectively, $\sup_{b_1,\ldots,b_n \in \OutcomeSet} \ExpVal{R_n} \le f(n)$ for all $n$. 
The algorithm of \citet{Weinberger:2002} for the adversarial full information setting
subsamples the reward sequence by the constant delay $\tau_{const}+1$, and runs a base algorithm \Base on each of the $\tau_{const}+1$ subsampled sequences. 
~\citet{Weinberger:2002} showed that if \Base enjoys a regret bound $f$ then their algorithm in the fixed delay case enjoys a regret bound $(\tau_{const}+1) f(n/(\tau_{const}+1))$. Furthermore, when \Base is minimax optimal in the non-delayed setting, the subsampling algorithm is also minimax optimal in the (full information) delayed setting, as can be seen by constructing a reward sequence that changes only in every $\tau_{const}+1$ times. Note that \citet{Weinberger:2002} do not require condition (i) of $f$. However, these conditions imply that $y f(x/y)$ is a concave function of $y$ for any fixed $x$ (a fact which will turn out to be useful in the analysis later), and are satisfied by all regret bounds we are aware of (e.g., for multi-armed bandits, contextual bandits, partial monitoring, etc.), which all have a regret upper bound of the form $\widetilde{O}(n^\alpha)$ for some $0 \le \alpha \le 1$, with, typically, $\alpha=1/2$ or  $2/3$.\footnote{$u_n=\widetilde{O}(v_n)$ means that there is a $\beta \ge 0$ such that $\lim_{n\to\infty} u_n/(v_n \log^\beta n)=0$.}. 

In this section we extend the algorithm of \citet{Weinberger:2002} to the case when the delays are not constant, and to the partial monitoring setting. The idea is that we run several instances of a non-delayed algorithm \Base as needed: an instance is ``free'' if it has received the feedback corresponding to its previous prediction -- before this we say that the instance is ``busy'', waiting for the feedback. When we need to make a prediction, we use one of existing instances that is free, and is hence ready to make another prediction. If no such instance exists, we create a new one to be used (a new instance is always ``free'', as it is not waiting for the feedback of a previous prediction). The resulting algorithm, which we call Black-Box Online Learning under Delayed feedback (BOLD) is shown below (note that when the delays are constant, BOLD reduces to the algorithm of~\citet{Weinberger:2002}): 

\begin{algorithm}[h]
\begin{algorithmic}
\FOR { \textbf{each} time instant $t = 1,2,\dots,n$ }
	\STATE {\bf Prediction:}
	\STATE \ \ Pick a free instance of \Base (independently of past predictions), or create a new instance if all existing  instances are busy. Feed the instance picked with $x_t$ and use its prediction.
	\STATE {\bf Update:}
	\FOR {\textbf{each} $ (s,h_s) \in H_t$}
		\STATE Update the instance used at time instant $s$ with the feedback $h_s$.
	\ENDFOR
\ENDFOR
\end{algorithmic}
\caption{Black-box Online Learning under Delayed feedback (\textbf{BOLD})}
\label{alg:BOLD}
\end{algorithm}

Clearly, the performance of BOLD depends on how many instances of \Base we need to create, and
how many times each instance is used. Let $M_t$ denote the number of \Base instances created by BOLD up to and including time $t$. That is, $M_1=1$, and we create a new instance
at the beginning of any time instant when all instances are waiting for their feedback. Let $G_t=\sum_{s=1}^{t-1} \Event{s+\tau_s \ge t}$ be the total number of outstanding (missing) feedbacks when the forecaster is making a prediction at time instant $t$. Then we have $G_t$ algorithms waiting for their feedback, and so $M_t \ge G_t+1$. Since we only introduce new instances when it is necessary (and each time instant at most one new instance is created), it is easy to see that 
\begin{equation}
\label{eq:Mt-Gt+1}
M_t=G^*_t+1
\end{equation}
for any $t$, where $G^*_t=\max_{1 \le s \le t} G_t$.

We can use the result above to transfer the regret guarantee of the non-delayed base algorithm \Base to a guarantee on the regret of BOLD. 

\begin{theorem}\label{thm:BOLD-Expected-Bound}
Suppose that the non-delayed algorithm \Base used in BOLD enjoys an (expected) regret bound $f_\Base$. 
Assume, furthermore, that the delays $\tau_t$ are independent of the forecaster's prediction $a_t$.
Then the expected regret of BOLD after $n$ time steps satisfies
\begin{align*} 
\ExpVal{R_n} &\le   \ExpVal{(G^*_n+1) f_\Base\left(\dfrac{n}{G^*_n+1}\right)} \nonumber \\
&\le (\ExpVal{G^*_n}+1) f_\Base\left(\dfrac{n}{\ExpVal{G^*_n}+1}\right).
\end{align*}
\end{theorem}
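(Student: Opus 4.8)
The plan is to exploit the defining property of BOLD that every individual instance of \Base operates on a problem that is \emph{free of delay}: an instance is reused only once it is ``free'', i.e.\ only after the feedback for its previous prediction has arrived. Consequently, before each of its own predictions an instance has already received the feedback for all of its earlier predictions, so from the instance's own viewpoint it faces an ordinary non-delayed online learning problem, and its (expected) regret on the sub-sequence of rounds it handles is governed by $f_\Base$. The first step is therefore to make this precise: let $T_i$ denote the (random) set of time instants at which the $i$-th instance is used, so that $T_1,\dots,T_{M_n}$ partition $\{1,\dots,n\}$ with $M_n=G^*_n+1$ by \eqref{eq:Mt-Gt+1}, and observe that the regret $R_n^{(i)}$ incurred on $T_i$ is bounded, in expectation, by $f_\Base(|T_i|)$.

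Next I would decompose the overall regret across instances. Since the comparator is a single fixed $a\in\mathcal{F}$ and the sets $T_i$ partition the horizon, the supremum of a sum is at most the sum of suprema, giving $R_n \le \sum_{i=1}^{M_n} R_n^{(i)}$. The key point making the per-instance bound applicable is the hypothesis that the delays are independent of the predictions: the entire free/busy bookkeeping --- and hence the partition $\{T_i\}$ and the count $M_n=G^*_n+1$ --- is a function of the delay sequence alone, independent of the play. I would therefore condition on the delays, under which the $|T_i|$ become fixed, so that the worst-case guarantee $\ExpVal{R_n^{(i)} \mid \text{delays}} \le f_\Base(|T_i|)$ is legitimate even though each $|T_i|$ is random unconditionally.

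The remainder is two invocations of concavity. Conditionally on the delays, $\sum_{i=1}^{M_n} f_\Base(|T_i|) \le M_n\, f_\Base(n/M_n) = (G^*_n+1)\, f_\Base\!\left(n/(G^*_n+1)\right)$ by Jensen's inequality, using that $f_\Base$ is concave with $f_\Base(0)=0$ and $\sum_i |T_i| = n$; taking expectations over the delays then yields the first displayed inequality. For the second, I would use the fact noted in the discussion of condition (i) that $y\mapsto y\, f_\Base(n/y)$ is concave in $y$ for fixed $n$; a final application of Jensen's inequality to the random variable $y=G^*_n+1$ moves the expectation inside, producing $(\ExpVal{G^*_n}+1)\, f_\Base\!\left(n/(\ExpVal{G^*_n}+1)\right)$.

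I expect the main obstacle to lie in the second step rather than in the concavity manipulations: one must argue carefully that the scheduling of rounds to instances is measurable with respect to the delays (and any internal randomization used to break ties among free instances), hence independent of the outcome sequence and of the play, so that conditioning on the delays freezes the partition and allows the non-delayed guarantee of \Base to be applied to each instance with a \emph{deterministic} number of rounds. Establishing the underlying structural claim that each instance genuinely sees a non-delayed problem --- which hinges on the ``free'' criterion capturing the arrival of \emph{all} outstanding feedback for that instance --- is the other place where care is needed.
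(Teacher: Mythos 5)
Your proposal is correct and follows essentially the same route as the paper's proof: the same per-instance decomposition of the regret (supremum of a sum bounded by sum of suprema), the same conditioning on the delay sequence to freeze the partition and instance count $M_n=G^*_n+1$ so that the non-delayed guarantee $f_\Base$ applies to each instance, the same application of Jensen's inequality with the concavity of $f_\Base$, and the same use of concavity of $y\mapsto y f_\Base(x/y)$ for the second inequality. You also correctly identified the subtle measurability point that the paper relegates to a footnote, namely that the scheduling is a function of the delays alone and hence independent of the play.
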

\begin{proof}
As the second inequality follows from the concavity of $y\mapsto yf_\Base(x/y)$ ($x,y> 0$),
 it remains to prove the first one.

For any $1 \le j \le M_n$, let $L_j$ denote the list of time instants in which BOLD has used the prediction chosen by instance $j$, and let
$n_j=|L_j|$ be the number of time instants this happens. Furthermore, let $R_{n_j}^j$ denote the regret incurred during the time instants $t$ with $t\in L_j$:
\[
R_{n_j}^j = \sup_{a\in \mathcal{F}} \sum_{t \in L_j}^{} r_t(x_t,a(x_t)) - \sum_{t \in L_j}^{} r_t(x_t,a_t),
\]
where $a_t$ is the prediction made by BOLD (and instance $j$) at time instant $t$.
By construction, instance $j$ does not experience any delays. Hence, $R_{n_j}^j$ is its regret in a non-delayed online learning problem. 
\footnote{Note that $L_j$ is a function of the delay sequence and is not a function of the predictions $(a_t)_{t\ge 1}$.
Hence, the reward sequence that instance $j$ is evaluated on is chosen obliviously whenever the adversary of BOLD is oblivious.} Then,
\begin{eqnarray*} 
 R_{n} & = & \sup_{a\in \mathcal{F}} \sum_{t=1}^{n} r_t(x_t,a(x_t)) - \sum_{t=1}^n r_t(x_t,a_t)\nonumber\\
& =& \sup_{a\in \mathcal{F}}\sum_{j=1}^{M_n} \sum_{t \in L_j}^{} r_t(x_t,a(x_t)) - \sum_{j=1}^{M_n} \sum_{t \in L_j}^{} r_t(x_t,a_t)\nonumber\\
& \le& \sum_{j=1}^{M_n} \left( \sup_{a\in \mathcal{F}} \sum_{t \in L_j}^{} r_t(x_t,a(x_t)) - \sum_{t \in L_j}^{} r_t(x_t,a_t)\right) \nonumber\\
& = &\sum_{j=1}^{M_n} R_{n_j}^j.
\end{eqnarray*}
Now, using the fact that $f_\Base$ is an (expected) regret bound, we obtain
\begin{align*}
\lefteqn{\ExpVal{R_n|\tau_1,\ldots,\tau_n} 
	 \le 
	\sum_{j=1}^{M_n} \ExpVal{R_{n_j}^j |\tau_1,\ldots,\tau_n}} \nonumber\\
	& \qquad\le \sum_{j=1}^{M_n} f_\Base(n_j) 
	= M_n \sum_{j=1}^{M_n} \dfrac{1}{M_n} f_\Base(n_j)\nonumber\\
& \qquad\le M_n f_\Base\left(\sum_{j=1}^{M_n} \dfrac{1}{M_n} n_j \right) = M_n f_\Base\left(\dfrac{n}{M_n}\right),
\end{align*}
where the first inequality follows since $M_n$ is a deterministic function of the delays, while
 the last inequality follows from Jensen's inequality and the concavity of $f_\Base$. Substituting $M_n$ from \eqref{eq:Mt-Gt+1} and taking the expectation concludes the proof. 
\end{proof}

Now, we need to bound $G^*_n$ to make the theorem meaningful. 
When all delays are the same constants, for $n > \tau_{const}$ we get $G^*_n=\tau_t=\tau_{const}$, and we get back the regret bound
\[
\ExpVal{R_n} \le (\tau_{const}+1) f_\Base\left( \dfrac{n}{\tau_{const}+1}\right)
\]
of \citet{Weinberger:2002}, thus generalizing their result to partial monitoring. We do not know whether this bound is tight even when \Base is minimax optimal, as the argument of \citet{Weinberger:2002} for the lower bound does not work in the partial information setting (the forecaster can gain extra information in each block with the same reward functions).

Assuming the delays are i.i.d., we can give an interesting bound on $G^*_n$. The result is based on the fact that although 
$G_t$ can be as large as $t$,  both its expectation and variance are upper bounded by $\ExpVal{\tau_1}$.

\begin{lemma}
\label{lem:exptau}
Assume $\tau_1,\ldots,\tau_n$ is a sequence of i.i.d. random variables with finite expected value, and let $B(n,t)=t+2\log n + \sqrt{4 t \log n}$. Then
\[
\ExpVal{G^*_n} \le B(n,\ExpVal{\tau_1}) + 1.
\]
\end{lemma}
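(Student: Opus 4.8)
The plan is to exploit the structure of $G_t=\sum_{s=1}^{t-1}\Event{s+\tau_s\ge t}$ as a sum of \emph{independent} Bernoulli variables, control each individual $G_t$ by a concentration inequality, and then pass from a tail bound on $G^*_n$ to a bound on its expectation via a union bound combined with the crude deterministic bound $G^*_n<n$.

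First I would observe that, for a fixed $t$, the indicator $\Event{s+\tau_s\ge t}=\Event{\tau_s\ge t-s}$ depends only on $\tau_s$, so the summands defining $G_t$ are mutually independent, and by the i.i.d.\ assumption each is Bernoulli with success probability $\Prob{\tau_1\ge t-s}$. Reindexing by $k=t-s$ gives $\ExpVal{G_t}=\sum_{k=1}^{t-1}\Prob{\tau_1\ge k}\le\sum_{k\ge 1}\Prob{\tau_1\ge k}=\ExpVal{\tau_1}$, using that $\tau_1$ is a nonnegative integer. Since the summands are independent Bernoullis, the variance equals $\sum_{k=1}^{t-1}\Prob{\tau_1\ge k}\bigl(1-\Prob{\tau_1\ge k}\bigr)\le\ExpVal{\tau_1}$ as well. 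Thus both the mean and the variance of every $G_t$ are bounded by $\mu\eqdef\ExpVal{\tau_1}$, which is precisely the remark stated just before the lemma.

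Next I would apply Bernstein's inequality to the centred sum $G_t-\ExpVal{G_t}$, whose summands are bounded by $1$ in absolute value and have total variance at most $\mu$. With deviation level $u=2\log n$ this yields $\Prob{G_t-\ExpVal{G_t}\ge\sqrt{4\mu\log n}+\tfrac{2}{3}\log n}\le e^{-2\log n}=n^{-2}$. Because $\ExpVal{G_t}\le\mu$ and $2\log n\ge\tfrac{2}{3}\log n$, the threshold $B(n,\mu)=\mu+2\log n+\sqrt{4\mu\log n}$ dominates $\ExpVal{G_t}+\sqrt{4\mu\log n}+\tfrac{2}{3}\log n$, so $\Prob{G_t\ge B(n,\mu)}\le n^{-2}$ for each $t\le n$. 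A union bound over $t=1,\dots,n$ then gives $\Prob{G^*_n\ge B(n,\mu)}\le n^{-1}$.

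Finally I would convert this tail bound into the claimed bound on $\ExpVal{G^*_n}$. Splitting the expectation according to whether $G^*_n$ exceeds $B(n,\mu)$, and using the deterministic bound $G_t\le t-1<n$ (hence $G^*_n<n$), gives $\ExpVal{G^*_n}\le B(n,\mu)+n\,\Prob{G^*_n\ge B(n,\mu)}\le B(n,\mu)+1$, as required. I expect the only delicate point to be the choice of concentration inequality and of the level $u=2\log n$: the constants must be tuned so that the per-$t$ tail decays like $n^{-2}$, since this is exactly what lets the union bound over the $n$ values of $t$ combine with the crude cutoff $G^*_n<n$ while losing only an additive constant. Checking that the Bernstein threshold sits below $B(n,\mu)$ (rather than matching it exactly) is then a routine inequality, not the crux.
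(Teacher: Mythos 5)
Your proposal is correct and takes essentially the same route as the paper's own proof: bound the mean and variance of each $G_t$ by $\ExpVal{\tau_1}$ using independence of the indicators, apply Bernstein's inequality with per-$t$ failure probability $n^{-2}$, take a union bound over $t \le n$, and convert the tail bound on $G^*_n$ into an expectation bound via the crude cutoff $G^*_n \le n$. The only difference is cosmetic: you invoke the standard Bernstein form with deviation level $u = 2\log n$ (yielding a $\tfrac{2}{3}\log n$ term that you correctly check is dominated by the $2\log n$ in $B(n,\ExpVal{\tau_1})$), whereas the paper uses the Cesa-Bianchi--Lugosi variant with $\delta = n^{-2}$, which produces $2\log n + \sqrt{4\ExpVal{\tau_1}\log n}$ directly.
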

\begin{proof}
First consider the expectation and the variance of $G_t$. For any $t$, 
\begin{align*}
\ExpVal{G_t}&=\ExpVal{\sum_{s=1}^{t-1} \Event{s+\tau_s \ge t}} 
=\sum_{s=1}^{t-1} \Prob{s+\tau_s \ge t} \\
&=\sum_{s=0}^{t-2} \Prob{\tau_1 > s} \le \ExpVal{\tau_1},
\end{align*}
and, similarly
\begin{align*}
\Variance{G_t}&=\sum_{s=1}^{t-1} \Variance{\Event{s+\tau_s \ge t}} \le\sum_{s=1}^{t-1} \Prob{s+\tau_s \ge t},
\end{align*}
so $\Variance{G_t} \le \ExpVal{\tau_1}$ in the same way as above.
By Bernstein's inequality \citep[Corollary~A.3]{Cesa-Bianchi:2006}, for any $0<\delta<1$ and any $t$ we have, with probability at least $1-\delta$,
\[
G_t -\ExpVal{G_t} \le \log \tfrac1\delta + \sqrt{2\Variance{G_t} \log\tfrac1\delta}.
\]
Applying the union bound for $\delta=1/n^2$, and our previous bounds on the variance and expectation of $G_t$, we obtain that with probability at least $1-1/n$,
\[
\max_{1\le t \le n} G_t \le \ExpVal{\tau_1} + 2\log n + \sqrt{4\ExpVal{\tau_1} \log n}.
\]
Taking into account that $\max_{1\le t \le n} G_t \le n$, we get the statement of the lemma.
\end{proof}

\begin{corollary}
Under the conditions of Theorem~\ref{thm:BOLD-Expected-Bound}, if the sequence of delays is i.i.d, then
\[
\ExpVal{R_n} \le (B(n,\ExpVal{\tau_1})+2) f_\Base\left(\dfrac{n}{B(n,\ExpVal{\tau_1})+2}\right).
\]
\end{corollary}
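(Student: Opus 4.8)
The plan is to simply chain the two results already in hand. Theorem~\ref{thm:BOLD-Expected-Bound} supplies the bound
\[
\ExpVal{R_n} \le (\ExpVal{G^*_n}+1)\, f_\Base\!\left(\frac{n}{\ExpVal{G^*_n}+1}\right),
\]
and Lemma~\ref{lem:exptau} supplies $\ExpVal{G^*_n} \le B(n,\ExpVal{\tau_1})+1$, so that $\ExpVal{G^*_n}+1 \le B(n,\ExpVal{\tau_1})+2$. The whole task is therefore to substitute the latter inequality into the former, and the only thing to check is that doing so preserves the direction of the inequality.

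The single nontrivial step is the monotonicity of the map $y \mapsto y\, f_\Base(n/y)$ on $(0,\infty)$. I would establish this directly from the standing assumptions on a regret bound, namely that $f_\Base$ is nondecreasing, concave, and satisfies $f_\Base(0)=0$. Writing $g(y)=y\,f_\Base(n/y)$ and differentiating gives $g'(y)=f_\Base(z)-z\,f_\Base'(z)$ with $z=n/y$; concavity together with $f_\Base(0)=0$ yields $f_\Base(z)\ge z\,f_\Base'(z)$ for every $z>0$ (apply the supporting-line inequality for a concave function at the point $z$ and evaluate at $0$), so $g'\ge 0$ and $g$ is nondecreasing. The same conclusion could be reached without differentiability: for $0<y_1\le y_2$, concavity and $f_\Base(0)=0$ make $f_\Base(z)/z$ nonincreasing in $z$, and since $n/y_1\ge n/y_2$ this gives $y_1 f_\Base(n/y_1)\le y_2 f_\Base(n/y_2)$ after clearing denominators. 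I would use whichever form is cleaner given the paper's conventions; note this is closely related to, but slightly stronger than, the concavity of $y\mapsto y f_\Base(x/y)$ already invoked in the proof of Theorem~\ref{thm:BOLD-Expected-Bound}.

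With monotonicity in place, applying $g$ to $\ExpVal{G^*_n}+1 \le B(n,\ExpVal{\tau_1})+2$ gives
\[
(\ExpVal{G^*_n}+1)\, f_\Base\!\left(\frac{n}{\ExpVal{G^*_n}+1}\right)
\le (B(n,\ExpVal{\tau_1})+2)\, f_\Base\!\left(\frac{n}{B(n,\ExpVal{\tau_1})+2}\right),
\]
and combining with the bound from Theorem~\ref{thm:BOLD-Expected-Bound} yields the claimed inequality. I expect no real obstacle here beyond correctly verifying the monotonicity; everything else is a one-line substitution. The mild subtlety worth flagging is that I use the sharper second form of Theorem~\ref{thm:BOLD-Expected-Bound} (the one already deterministic in $\ExpVal{G^*_n}$), rather than the first form involving $\ExpVal{(G^*_n+1)f_\Base(n/(G^*_n+1))}$, so the argument does not need to reach back inside the expectation.
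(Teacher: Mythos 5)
Your proposal is correct and follows essentially the same route the paper intends: the corollary is stated without proof precisely because it is the immediate chaining of Theorem~\ref{thm:BOLD-Expected-Bound} with Lemma~\ref{lem:exptau}. Your explicit verification that $y \mapsto y\,f_\Base(n/y)$ is nondecreasing (via concavity and $f_\Base(0)=0$) is the one detail the paper leaves implicit --- it only remarks on concavity of this map --- and your argument for it is sound.
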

Note that although the delays can be arbitrarily large, whenever the expected value is finite, the bound only increases by a $\log n$ factor.

\subsection{Finite stochastic setting}\label{sec:stochastic}
In this section, 
	we consider the case when the prediction set $\ActionSet$ of the forecaster is finite; 
	without loss of generality we assume $\ActionSet=\{1,2,\dots,K\}$. 
We also assume that there is no side information 
	(that is, $x_t$ is a constant for all $t$, and, hence, will be omitted; 
	the results can be extended easily to the case of a finite side information set, 
	where we can repeat the procedures described below for each value of the side information separately).
The main assumption in this section is that the outcomes $(b_t)_{t\ge 1}$ form an i.i.d. sequence, 
	which is also independent of the predictions of the forecaster. 
When $\OutcomeSet$ is finite, this leads to the standard i.i.d. partial monitoring (IPM) setting, 
	while the conventional multi-armed bandit (MAB) setting is recovered 
	when the feedback is the reward of the last prediction, that is, $h_t=r_t(a_t,b_t)$.
As in the previous section, we will assume that the feedback delays are independent of the outcomes of the environment. 
The main result of this section shows that under these assumptions, 
	the penalty in the regret grows in an additive fashion due to the delays, 
	as opposed to the multiplicative penalty that we have seen in the adversarial case. 

By the independence assumption on the outcomes, the sequences of potential rewards $r_t(i)\eqdef r(i,b_t)$ and feedbacks $h_t(i) \eqdef h(i,b_t)$ are i.i.d., respectively, for the same prediction $i\in\ActionSet$. In this setting we also assume that the feedback and reward sequences of different predictions are independent of each other. Let $\mu_i=\ExpVal{r_t(i)}$ denote the expected reward of predicting $i$, $\mu^*=\max_{i\in\ActionSet} \mu_i$ the optimal reward and $i^*$ with $\mu_{i^*}=\mu^*$ the optimal prediction. 
Moreover, let $T_i(n) = \sum_{t=1}^n \Event{a_t=i}$ denote the number of times $i$ is predicted by the end of time instant $n$. Then, defining the ``gaps'' $\Delta_i = \mu^* - \mu_i$ for all $i \in \ActionSet$, the expected regret of the forecaster becomes
\begin{equation}\label{eq:Stochastic-Exp-Regret}
\ExpVal{R_n} = \sum_{t=1}^n \mu^* - \mu_{a_t} = \sum_{i=1}^K \Delta_i \ExpVal{T_i(n)}.
\end{equation}

Similarly to the adversarial setting, we build on a base algorithm \Base for the non-delayed case. The advantage in the IPM setting (and that we consider expected regret) is that here \Base can consider a permuted order of rewards and feedbacks, and so we do not have to wait for the actual feedback; it is enough to receive a feedback for the same prediction.
This is the idea at the core of our algorithm, Queued Partial Monitoring with Delayed Feedback (\StochBlackBoxAlg):
\begin{algorithm}[ht]
\begin{algorithmic}
\STATE Create an empty FIFO buffer $Q[i]$ for each $i \in \ActionSet$.
\STATE Let $I$ be the first prediction of \Base.
\FORALL {time instant $t =1,2,\dots,n$ }
	\STATE {\bf Predict:}
	\WHILE {$Q[I]$ is  not empty}
		\STATE Update \Base\ with a feedback from Q[I].
		\STATE Let $I$ be the next prediction of \Base.
	\ENDWHILE
	\STATE There are no buffered feedbacks for $I$, so predict $a_t=I$ at time instant $t$ to get a feedback.
	\STATE {\bf Update:}
	\FOR {\textbf{each} $(s,h_s) \in H_t$}
		\STATE Add the feedback $h_s$ to the buffer $Q[a_s]$.
	\ENDFOR
\ENDFOR
\end{algorithmic}
\caption{Queued Partial Monitoring with Delays (\StochBlackBoxAlg)}
\label{alg:Stochastic-Blackbox}
\end{algorithm}

Here we have a \Base partial monitoring algorithm for the non-delayed case, which is run inside the algorithm. The feedback information coming from the environment is stored in
separate queues for each prediction value. The outer algorithm constantly queries \Base: while feedbacks for the predictions made are available in the queues, only the inner algorithm \Base runs
(that is, this happens within a single time instant in the real prediction problem). When no feedback is available, the outer algorithm keeps sending the same prediction to the real environment until a feedback for that prediction arrives. In this way \Base is run in a simulated non-delayed environment. 
The next lemma implies that the inner algorithm \Base actually runs in a non-delayed version of the problem, as it experiences the same distributions:
\begin{lemma}\label{thm:DelayReordering}
Consider a delayed stochastic IPM problem as defined above. For any prediction $i$, for any $s \in \Naturals$ let $h'_{i,s}$ denote the $\thtimes{s}$ feedback \StochBlackBoxAlg receives for predicting $i$. Then the sequence $(h'_{i,s})_{s \in \Naturals}$ is an i.i.d. sequence with the same distribution as the sequence of feedbacks $(h_{t,i})_{t \in \Naturals}$ for prediction $i$.
\end{lemma}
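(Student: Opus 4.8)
The plan is to show directly that every feedback \StochBlackBoxAlg{} receives for a fixed prediction $i$ is, conditioned on everything observed before it arrives, a fresh sample from the law of $h(i,b_1)$; an induction on $s$ then yields that $(h'_{i,s})_{s\in\Naturals}$ is i.i.d.\ with the claimed distribution. To this end I would introduce a filtration $(\mathcal{F}_t)_{t\ge 0}$, where $\mathcal{F}_t$ collects all information available to \StochBlackBoxAlg{} through the end of real time instant $t$: the internal randomness of \Base, the delays $\tau_s$ realized by time $t$, and every feedback that has \emph{arrived} by time $t$ together with the index of the prediction it concerns. The key structural fact about \StochBlackBoxAlg{} is that at each real time instant exactly one prediction $a_t$ is played in the environment, and its value is fixed by \Base's state, i.e.\ by the buffered feedbacks consumed so far; hence $a_t$ is $\mathcal{F}_{t-1}$-measurable, and since the outcomes $(b_t)$ are i.i.d.\ and independent of the predictions, $b_t$ is independent of $\mathcal{F}_{t-1}$.

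Next I would track the feedbacks for $i$ in the order \StochBlackBoxAlg{} receives them. Let $U_1<U_2<\dots$ be the times at which a feedback for $i$ is added to $Q[i]$, and let $t_s$ be the play-time that generated the $s$-th such feedback, so that $t_s+\tau_{t_s}=U_s$, $a_{t_s}=i$, and $h'_{i,s}=h(i,b_{t_s})$. The crucial claim is that $b_{t_s}$ is independent of $\mathcal{F}_{U_s-1}$ and distributed as $b_1$. The only way the interaction can depend on $b_{t_s}$ is through a feedback generated by a real play made at time $t_s$; but the unique play at $t_s$ was $a_{t_s}=i$, whose feedback $h(i,b_{t_s})$ arrives exactly at $U_s$ and is therefore \emph{not} in $\mathcal{F}_{U_s-1}$. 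Moreover the identity of $t_s$ (which play of $i$ is the $s$-th to arrive) is a function of past decisions and of the delays, both independent of the value $b_{t_s}$ — here I use that the delays are independent of the outcomes. Consequently $h'_{i,s}\sim h(i,b_1)$ and is independent of $\mathcal{F}_{U_s-1}$; since $h'_{i,1},\dots,h'_{i,s-1}$ are $\mathcal{F}_{U_s-1}$-measurable, the induction closes and $(h'_{i,s})_s$ is i.i.d.\ with the same law as $(h_t(i))_t$.

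The step I expect to be the main obstacle is precisely the reordering caused by the delays. A naive argument would first note, by an optional-skipping argument, that the feedbacks for $i$ taken in \emph{generation} order are i.i.d., and then try to conclude that the received sequence is i.i.d.\ because it is a permutation of an i.i.d.\ sequence. This fails: feedbacks for $i$ can arrive out of generation order (a play with a long delay is overtaken by a later play with a short one), and the permutation is \emph{not} independent of the values, because the spacing between successive plays of $i$ is governed by the earlier feedbacks \Base{} has already consumed. The resolution, carried out above, is to avoid reasoning about the permutation and instead run the conditional-independence argument along the arrival-time filtration, exploiting that each outcome affects the algorithm only through its own, as-yet-unarrived, feedback. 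Making the measurability of $t_s$ and $U_s$ fully rigorous — so that conditioning on ``$t_s$ is the generating index of the $s$-th arrival'' does not secretly condition on the value $b_{t_s}$ — is the one place where care is required, and it is the delayed-feedback analogue of the classical optional-skipping theorem.
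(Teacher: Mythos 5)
Your argument is correct in substance, but it takes a genuinely different route from the paper's. The paper proves the lemma by composing two classical facts: an i.i.d.\ sequence reordered by a permutation that is \emph{independent of the values} remains i.i.d.\ (Lemma~\ref{lem:i.i.d.Reordering}), and an optional-skipping subsequence whose inclusion decisions never look at future values remains i.i.d.\ (Doob's theorem, Lemma~\ref{lem:i.i.d.SequentialChoice}). The trick is the object these are applied to: the paper takes the full sequence of \emph{potential} feedbacks $h_{i,t}=h(i,b_t)$ --- one for every time step $t$, whether or not $i$ is actually predicted --- sorted by the potential observation times $t+\tau_{i,t}$. Since the delays are independent of the outcomes, this sorting permutation is independent of the values, so the sorted sequence $(Z_{i,t})$ is i.i.d.; the observed sequence $(h'_{i,s})$ is then an optional-skipping subsequence of $(Z_{i,t})$, because the decision to play $i$ at a given time depends only on feedbacks already received, i.e., on strictly earlier terms of the sorted sequence (plus independent information). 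Note that your diagnosis of why the ``naive'' skip-then-permute argument fails is exactly right, and it is precisely why the paper permutes \emph{first} and skips \emph{second}: applied to the full potential sequence, the permutation uses only the delays, never the values. Your own proof avoids both lemmas and instead runs a conditional-independence induction along the arrival-time filtration; this is self-contained, makes explicit why action-dependent delays are harmless, and does not need the potential-feedback construction at all. What it costs is exactly the bookkeeping you flag: conditioning on the random pair $(t_s,U_s)$ must be justified, e.g., by a coupling/resampling argument showing that on $\{t_s=t,\,U_s=u\}$ the event itself and every $\mathcal{F}_{u-1}$-measurable quantity are functions of the delays, the internal randomization, and the outcomes $(b_{t'})_{t'\neq t}$ only. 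One further small gap in your induction: feedbacks for $i$ can arrive simultaneously, so one may have $U_{s-1}=U_s$, in which case $h'_{i,s-1}$ is \emph{not} $\mathcal{F}_{U_s-1}$-measurable; simultaneous arrivals must be handled jointly (their generating time indices are distinct, so the corresponding outcomes are still independent of each other and of the past). The paper's sorting argument faces the same tie-breaking issue and is equally silent about it, so this is a refinement both proofs need rather than a defect of yours alone.
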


To relate the non-delayed performance of \Base and the regret of \StochBlackBoxAlg, we need a few definitions.
For any $t$, let $S_i(t)$ denote the number of feedbacks for prediction $i$ that are received by the end of time instant $t$. 
Then the number of missing feedbacks for $i$ 
	when making a prediction at time instant $t$ is $G_{i,t} = T_i(t-1) - S_i(t-1)$. 
Let $G^*_{i,n} = \max_{1 \le t \le n} G_{i,t}$.
Furthermore, for each $i \in \ActionSet$, let $T'_i(t')$ be the number of times algorithm \Base has predicted $i$ while being queried $t'$ times.
Let $n'$ denote the number of steps the inner algorithm \Base makes in $n$ steps of the real IPM problem. Next we relate $n$ and $n'$, as well as the number of times
\StochBlackBoxAlg and \Base (in its simulated environment) make a specific prediction.
\begin{lemma}\label{lem:Outstanding-Rewards-UpperBound}
Suppose \StochBlackBoxAlg is run for $n \ge 1$ time instants, and has queried \Base  $n'$ times. Then
$n' \le n$ and
\begin{equation}\label{eq:Outstanding-Rewards-UpperBound}
0 \le {T_i(n) - T'_i(n')} \le G^*_{i,n}.
\end{equation}
\end{lemma}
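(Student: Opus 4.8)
The plan rests on one structural fact about the inner loop of \StochBlackBoxAlg. \Base\ is re-queried only immediately after a buffered feedback has been consumed. Writing $I_1,\dots,I_{n'}$ for the successive predictions returned by \Base, this means each $I_m$ with $m<n'$ is matched with exactly one feedback for action $I_m$ (consuming that feedback is what produces the query returning $I_{m+1}$), whereas $I_{n'}$ is the prediction currently in use. Two consequences follow. First, every query after the first is immediately preceded by one feedback consumption, so $n'=1+(\text{total feedbacks consumed})$; since a feedback can be consumed only after it is received, and anything received in the final update phase (at instant $n$) is never consumed during the run, the total consumed is at most the number received by the end of instant $n-1$, which is at most $n-1$, whence $n'\le n$. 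Second, the number of feedbacks for a fixed $i$ that get consumed equals $T'_i(n')-\Event{I_{n'}=i}$, and by the FIFO property of $Q[i]$ the $k$-th consumed feedback for $i$ is the one generated by the $k$-th real prediction $a_s=i$.

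For the upper bound I would examine the last instant $t^*=\max\{t\le n:a_t=i\}$ at which $i$ is predicted in the real environment (if there is none, both sides of the claim vanish). At that predict phase the buffer $Q[i]$ is empty — this emptiness is precisely the condition under which \StochBlackBoxAlg issues a real prediction of $i$ — so all $S_i(t^*-1)$ feedbacks for $i$ received so far have already been consumed. By the structural fact these consumptions correspond to the matched \Base\ predictions of $i$, of which there are $T'_i(c)-1$, where $c$ is the query count reached at $t^*$ and the subtracted unit is the in-use prediction $i$ itself. Hence $T'_i(c)=S_i(t^*-1)+1$, and together with $T_i(t^*)=T_i(t^*-1)+1$ this gives $T_i(t^*)-T'_i(c)=T_i(t^*-1)-S_i(t^*-1)=G_{i,t^*}$. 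Since $i$ is not predicted after $t^*$ we have $T_i(n)=T_i(t^*)$, and since $T'_i$ is nondecreasing with $c\le n'$ we have $T'_i(n')\ge T'_i(c)$; therefore $T_i(n)-T'_i(n')\le G_{i,t^*}\le G^*_{i,n}$.

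The lower bound $T_i(n)\ge T'_i(n')$ follows by comparing consumed and produced feedbacks. We have $T'_i(n')=(\text{feedbacks for }i\text{ consumed})+\Event{I_{n'}=i}$, while the number consumed can never exceed the number produced, which is exactly $T_i(n)$. When $I_{n'}\ne i$ this already gives $T'_i(n')\le T_i(n)$. When $I_{n'}=i$, the in-use prediction has itself triggered at least one real prediction $a_t=i$ whose feedback is never consumed (\Base\ is not queried again), so there is a produced-but-unconsumed feedback that absorbs the indicator and again yields $T_i(n)\ge T'_i(n')$.

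The main obstacle, and where I would be most careful, is the clash between two clocks: real time $t$, which drives $T_i$, $S_i$ and $G_{i,t}$, and the query count $t'$, which drives $T'_i$. This is aggravated by the fact that a single prediction of \Base\ may be reused as the real prediction over several consecutive instants while its buffer stays empty, so the query count can lag arbitrarily behind real time. All the $\Event{I_{n'}=i}$ corrections above stem from correctly accounting for this one in-use prediction; once that offset is pinned down, the remainder is just monotonicity of $T'_i$ and the FIFO ordering of $Q[i]$.
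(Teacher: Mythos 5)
Your proof is correct and follows essentially the same route as the paper's: both arguments pivot on the last instant $t^*$ at which $i$ is predicted in the real environment, use the emptiness of $Q[i]$ together with the fact that the in-use \Base prediction at that moment is $i$ to get $T'_i(c)=S_i(t^*-1)+1$, and finish via monotonicity of $T'_i$ and $T_i(n)=T_i(t^*)$; your explicit consumption--query bijection for $n'\le n$ and your case analysis for the lower bound $T'_i(n')\le T_i(n)$ (which the paper's proof leaves implicit) make the write-up, if anything, more complete. One small caveat: your parenthetical claim that FIFO ordering makes the $k$-th consumed feedback for $i$ the one generated by the $k$-th real prediction of $i$ is false in general (delays can reorder arrivals, so it is only the $k$-th \emph{received} feedback for $i$), but this is harmless since your argument only ever uses cardinalities, never that correspondence.
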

\vspace*{-0.3cm}
\begin{proof}
Since \Base can take at most one step for each feedback that arrives, and  \StochBlackBoxAlg has to make at least one step for each arriving feedback, $n' \le n$.

Now, fix a prediction $i\in \ActionSet$.
If \Base, and hence,  \StochBlackBoxAlg, has not predicted $i$ by time instant $n$, \eqref{eq:Outstanding-Rewards-UpperBound} trivially holds. Otherwise,
let $t_{n,i}$ denote the last time instant (up to time $n$) when \StochBlackBoxAlg predicts $i$. Then $T_i(n) = T_i(t_{n,i}) = T_i(t_{n,i}-1)+1$. Suppose \Base has been queried $n'' \le n$ times by time instant $t_{n,i}$ (inclusive). At this time instant, the buffer $Q[i]$ must be empty and \Base must be predicting $i$, otherwise \StochBlackBoxAlg would not predict $i$ in the real environment. 
This means that all the $S_i(t_{n,i}-1)$ feedbacks that have arrived before this time instant have been fed to the base algorithm, which has also made an extra step, that is, 
$T'_i(n') \ge T'_i(n'') = S_i(t_{n,i}-1)+1$. Therefore,
\begin{align*}
T_i(n) - T'_i(n') & \le T_i(t_{n,i}-1) + 1 - (S_i(t_{n,i}-1) +1)\\
& \le G_{i,t_{n,i}} \le G^*_{i,n}. \qedhere
\end{align*}
\end{proof}

We can now give an upper bound on the expected regret of Algorithm~\ref{alg:Stochastic-Blackbox}.

\begin{theorem}\label{thm:Q-Alg-Expected-Regret}
Suppose the non-delayed \Base algorithm is used in \StochBlackBoxAlg in a delayed stochastic IPM environment. Then the expected regret of \StochBlackBoxAlg is upper-bounded by
\begin{equation}\label{eq:Q-Alg-Expected-Regret}
\ExpVal{R_n} \le \ExpVal{ R^\textsc{Base}_n } + \sum_{i=1}^{K} \Delta_i \ExpVal{ G^*_{i,n} },
\end{equation}
where $\ExpVal{R^\textsc{Base}_n}$ is the expected regret of \Base when run in the same environment without delays.
\end{theorem}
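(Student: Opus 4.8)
The plan is to begin from the regret decomposition \eqref{eq:Stochastic-Exp-Regret}, namely $\ExpVal{R_n} = \sum_{i=1}^K \Delta_i \ExpVal{T_i(n)}$, and to split each prediction count according to how much of it is explained by the inner algorithm \Base. Concretely, I would write $T_i(n) = T'_i(n') + \bigl(T_i(n) - T'_i(n')\bigr)$ for each $i$ and treat the two resulting sums separately: the discrepancy sum will give the additive $G^*_{i,n}$ penalty, while the $T'_i(n')$ sum will be matched against the non-delayed regret $\ExpVal{R^\textsc{Base}_n}$.

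For the discrepancy term, Lemma~\ref{lem:Outstanding-Rewards-UpperBound} applies directly: since $0 \le T_i(n) - T'_i(n') \le G^*_{i,n}$ pathwise, we get $\sum_{i=1}^K \Delta_i \ExpVal{T_i(n) - T'_i(n')} \le \sum_{i=1}^K \Delta_i \ExpVal{G^*_{i,n}}$, which is exactly the additive term in \eqref{eq:Q-Alg-Expected-Regret}. This step is immediate once the split is in place.

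The substantive part is bounding $\sum_{i=1}^K \Delta_i \ExpVal{T'_i(n')}$. I would observe that $\sum_{i=1}^K \Delta_i T'_i(n') = R^\textsc{Base}_{n'}$ is precisely the pathwise regret that \Base accumulates over its first $n'$ queries in the environment it simulates. Two ingredients then combine. First, Lemma~\ref{thm:DelayReordering} guarantees that the feedback stream \Base sees inside \StochBlackBoxAlg is i.i.d. with the same law as in the genuine non-delayed problem, so \Base's trajectory can be coupled with a true non-delayed run and, for any fixed horizon, its regret has the same distribution as the non-delayed regret. Second, because $n' \le n$ by Lemma~\ref{lem:Outstanding-Rewards-UpperBound} and every gap satisfies $\Delta_i \ge 0$, the map $t' \mapsto R^\textsc{Base}_{t'} = \sum_{i=1}^K \Delta_i T'_i(t')$ is nondecreasing in $t'$ (each additional query increments exactly one $T'_i$ by one, adding a nonnegative $\Delta_i$). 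Hence $R^\textsc{Base}_{n'} \le R^\textsc{Base}_n$ holds pathwise along the coupled non-delayed run, and taking expectations gives $\sum_{i=1}^K \Delta_i \ExpVal{T'_i(n')} \le \ExpVal{R^\textsc{Base}_n}$. Adding this to the discrepancy bound yields the claim.

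The step I expect to be the main obstacle is this last comparison, because $n'$ is a random quantity determined by the delay and arrival process rather than by a fixed clock. The care needed is to keep the purely pathwise monotonicity argument (which uses only $\Delta_i \ge 0$ and $n' \le n$) cleanly separated from the distributional identification supplied by Lemma~\ref{thm:DelayReordering}. The delicate point is to set up the coupling so that \Base's internal evolution is identical in the simulated and the genuine non-delayed runs up to query $n'$, and to argue that extending the inner \Base from $n'$ to $n$ steps with fresh i.i.d. feedbacks produces a process whose expected regret is exactly $\ExpVal{R^\textsc{Base}_n}$ in the true non-delayed sense — i.e., that stopping at the random time $n'$ does not bias the comparison. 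Once the coupling is in place, the inequality $R^\textsc{Base}_{n'} \le R^\textsc{Base}_n$ is immediate and the remainder is bookkeeping.
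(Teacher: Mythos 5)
Your proposal is correct and follows essentially the same route as the paper's proof: it uses Lemma~\ref{lem:Outstanding-Rewards-UpperBound} for the additive $\sum_i \Delta_i \ExpVal{G^*_{i,n}}$ term, and handles the main term by extending the inner run of \Base to $n$ queries, invoking monotonicity (your pathwise $R^\textsc{Base}_{n'} \le R^\textsc{Base}_{n}$ is exactly the paper's $T'_i(n') \le T'_i(n)$ after weighting by $\Delta_i$ and summing), and identifying the extended run's expected regret with the non-delayed $\ExpVal{R^\textsc{Base}_n}$ via Lemma~\ref{thm:DelayReordering}. The ``coupling'' you describe is precisely the paper's (more informally stated) argument that \Base, fed the reordered i.i.d. feedbacks, operates for $n$ steps in a simulated environment with the same distributions but no delay.
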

When the delay $\tau_t$ is bounded by $\tau_{max}$ for all $t$, we also have $G^*_{i,n} \le \tau_{max}$, and $\ExpVal{R_n} \le \ExpVal{ R^\textsc{Base}_n } + O(\tau_{max})$.
When the sequence of delays for each prediction is i.i.d. with a finite expected value but unbounded support, we can use Lemma~\ref{lem:exptau} to bound $G^*_{i,n}$, and obtain a bound $\ExpVal{ R^\textsc{Base}_n } + O(\ExpVal{\tau_1}+\sqrt{\ExpVal{\tau_1}\log n} + \log n)$. 
\mbox{}\vspace*{-0.2cm}\mbox{}
\begin{proof}
Assume that  \StochBlackBoxAlg is run longer so that  \Base is queried for $n$ times (i.e., it is queried $n-n'$ more times). Then, since $n' \le n$, the number of times $i$ is predicted by the base algorithm, namely $T'_i(n)$, can only increase, that is, $T'_i(n') \le T'_i(n)$. Combining this with the expectation
of \eqref{eq:Outstanding-Rewards-UpperBound} gives
\begin{equation*}
\ExpVal{T_i(n)} \le \ExpVal{ T'_i(n) } + \ExpVal{ G^*_{i,n} },
\end{equation*}
which in turn gives,
\begin{align}\label{eq:WaldRegret-BB}
\sum_{i=1}^{K} & \Delta_i \ExpVal{T_i(n)}
 \le \sum_{i=1}^{K} \Delta_i \ExpVal{ T'_i(n)} + \sum_{i=1}^{K} \Delta_i \ExpVal{ G^*_{i,n}}.
\end{align}
As shown in Lemma~\ref{thm:DelayReordering}, the reordered rewards and feedbacks $h'_{i,1}, h'_{i,2},  \dots, h'_{i,T'_i(n')}, \dots h'_{i,T_i(n)}$ are i.i.d. with the same distribution as the original feedback sequence $(h_{t,i})_{t \in \Naturals}$. The base algorithm \Base has worked on the first $T'_i(n)$ of these feedbacks for each $i$ (in its extended run), and has therefore operated for $n$ steps in a simulated environment with the same reward and feedback distributions, but without delay. Hence, the first summation in the right hand side of~\eqref{eq:WaldRegret-BB} is in fact $\ExpVal{R^\text{Base}_n}$, the expected regret of the base algorithm in a non-delayed environment. This concludes the proof.
\end{proof}

\section{UCB for the Multi-Armed Bandit Problem with Delayed Feedback}
\label{sec:UCB}
While the algorithms in the previous section provide an easy way to convert algorithms devised for the non-delayed case to ones that can handle delays in the feedback, improvements can be achieved if one makes modifications inside the existing non-delayed algorithms while retaining their theoretical guarantees. This can be viewed as a "white-box" approach to extending online learning algorithms to the delayed setting, and enables us to escape the high memory requirements of black-box algorithms that arises for both of our methods in the previous section when the delays are large. We consider the stochastic multi-armed bandit problem, and extend the UCB family of algorithms \citep{Auer:2002,Garivier:2011} to the delayed setting. The modification proposed is quite natural, and the common characteristics of UCB-type algorithms enable a unified way of extending their performance guarantees to the delayed setting (up to an additive penalty due to delays).

Recall that in the stochastic MAB setting, which is a special case of the stochastic IPM problem of Section \ref{sec:stochastic}, the feedback at time instant $t$ is $h_t=r(a_t, b_t)$, and there is a distribution $\nu_i$ from which the rewards of each prediction $i$ are drawn in an i.i.d. manner. Here we assume that the rewards of different predictions are independent of each other. We use the same notation as in Section \ref{sec:stochastic}.

Several algorithms devised for the non-delayed stochastic MAB problem are based on upper confidence bounds (UCBs), which are optimistic estimates of the expected reward of different predictions. Different UCB-type algorithms use different upper confidence bounds, and choose, at each time instant, a prediction with the largest UCB. 
Let $B_{i,s,t}$ denote the UCB for prediction $i$ at time instant $t$, where $s$ is the number of reward samples used in computing the estimate. In a non-delayed setting, the prediction of a UCB-type algorithm at time instant $t$ is given by
$a_t = \text{argmax}_{i \in \ActionSet}\ B_{i,T_i(t-1),t}.$
In the presence of delays, one can simply use the same upper confidence bounds only with the rewards that are observed, and predict
\begin{equation} \label{eq:UCB-Delayed-General-Rule}
a_t = \text{argmax}_{i \in \ActionSet}\ B_{i,S_i(t-1),t}
\end{equation}
at time instant $t$ (recall that $S_i(t-1)$ is the number of rewards that can be observed for prediction $i$ before time instant $t$). Note that if the delays are zero, this algorithm reduces to the corresponding non-delayed version of the algorithm. 

The algorithms defined by \eqref{eq:UCB-Delayed-General-Rule} can easily be shown to enjoy the same regret guarantees compared to their non-delayed versions, up to an additive penalty depending on the delays. This is because the analyses of the regrets of UCB algorithms follow the same pattern of upper bounding the number of trials of a suboptimal prediction using  concentration inequalities suitable for the specific form of UCBs they use. 

As an example, the UCB1 algorithm \citep{Auer:2002} uses UCBs of the form
$B_{i,s,t} = \hat{\mu}_{i,s} + \sqrt{{2 \log(t)}/{s}}$,
where $\hat{\mu}_{i,s} = \tfrac{1}{s} \sum_{t=1}^{s} h'_{i,t}$ is the average of the first $s$ observed rewards. Using this UCB in our decision rule
\eqref{eq:UCB-Delayed-General-Rule}, we can bound the regret of the resulting algorithm (called Delayed-UCB1) in the delayed setting:
\begin{theorem}\label{thm:UCB-Delayed-Expected-Bound}
For any $n \ge 1$, the expected regret of the Delayed-UCB1 algorithm is bounded by
\vspace{-0.1cm}
\begin{align*}
\ExpVal{R_n} \le  
& \sum_{i: \Delta_i > 0}\left[ \dfrac{8 \log{n}}{\Delta_i} + 3.5\Delta_i \right] 
+
 \sum_{i=1}^{K} \Delta_i \ExpVal{G_{i,n}^*}.
\end{align*}
\mbox{}\vspace*{-0.2cm}\mbox{}
\end{theorem}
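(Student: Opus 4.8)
The plan is to bound $\ExpVal{T_i(n)}$ for each suboptimal prediction $i$ (those with $\Delta_i > 0$) and then assemble the bounds through the regret decomposition \eqref{eq:Stochastic-Exp-Regret}, noting that $\Delta_{i^*}=0$ so the optimal arm contributes nothing. Fix such an $i$ and set $\ell = \ceiling{8\log n/\Delta_i^2}$. The central idea is to split the pulls of arm $i$ according to the number of \emph{observed} rewards $S_i(t-1)$ that actually drive the decision rule \eqref{eq:UCB-Delayed-General-Rule}, rather than the number of pulls $T_i(t-1)$:
\[
T_i(n) = \sum_{t=1}^n \Event{a_t = i,\, S_i(t-1) < \ell} + \sum_{t=1}^n \Event{a_t = i,\, S_i(t-1) \ge \ell}.
\]
This split isolates the two effects that inflate the number of pulls: having too few observations yet (the delay effect) and the usual UCB overshoot.

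First I would bound the ``early'' sum. Since $S_i(t-1)$ is nondecreasing in $t$, the instants with $S_i(t-1) < \ell$ form an initial segment, so all early pulls precede all late pulls and their count equals $T_i$ at the end of that segment. At any early pull we have $T_i(t-1) = S_i(t-1) + G_{i,t} \le (\ell-1) + G^*_{i,n}$, and hence the number of early pulls is at most $\ell + G^*_{i,n}$. This is precisely where the additive delay penalty enters: the gap $T_i(t-1)-S_i(t-1)=G_{i,t}$ between pulls and observations is controlled by $G^*_{i,n}$, playing here the role that Lemma~\ref{lem:Outstanding-Rewards-UpperBound} played for \StochBlackBoxAlg, which is why the final bound again has the shape ``non-delayed regret plus $\sum_i\Delta_i\ExpVal{G^*_{i,n}}$''.

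Next I would handle the ``late'' sum by the standard UCB1 argument of \citet{Auer:2002}, now applied to the observed samples. When $a_t = i$ with $S_i(t-1)\ge\ell$, the rule \eqref{eq:UCB-Delayed-General-Rule} forces $B_{i^*,S_{i^*}(t-1),t} \le B_{i,S_i(t-1),t}$, which can hold only if the optimal arm's mean is underestimated, arm $i$'s mean is overestimated, or the confidence half-width $\sqrt{2\log t/S_i(t-1)}$ exceeds $\Delta_i/2$; the choice $\ell=\ceiling{8\log n/\Delta_i^2}$ rules out the third case whenever $S_i(t-1)\ge\ell$. The first two events are controlled by Hoeffding's inequality and a union bound over the sample counts, and summing the resulting $O(t^{-2})$ probabilities over $t$ contributes only a constant (the source of the $3.5\Delta_i$ term). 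For this step to be valid I must invoke the fact---established by the same reordering-and-independence argument as Lemma~\ref{thm:DelayReordering}, using that the delays are independent of the outcomes---that each $\hat{\mu}_{i,s}$ is an average of $s$ i.i.d.\ draws from $\nu_i$; this is what licenses applying the concentration bound to $\hat{\mu}_{i,S_i(t-1)}$ despite $S_i(t-1)$ being random.

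Combining the two parts yields $\ExpVal{T_i(n)} \le \ell + \ExpVal{G^*_{i,n}} + O(1)$, so that $\Delta_i\ExpVal{T_i(n)} \le 8\log n/\Delta_i + 3.5\Delta_i + \Delta_i\ExpVal{G^*_{i,n}}$; summing over $i$ with $\Delta_i>0$ and extending the delay sum to all $i$ gives the statement. I expect the main obstacle to be the late-pull term: making the union bound over the random observation counts rigorous, and cleanly justifying the i.i.d.\ property of the reordered observed rewards so that the UCB1 concentration machinery transfers verbatim. The early-pull bound, by contrast, is a direct consequence of $T_i(t-1)-S_i(t-1)=G_{i,t}\le G^*_{i,n}$ and requires no probabilistic work.
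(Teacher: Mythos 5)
Your proposal is correct and is essentially the paper's own proof: the paper's Appendix~\ref{apx:GeneralUCBScheme} derives exactly your decomposition (writing $T_i(t-1)=S_i(t-1)+G_{i,t}\le S_i(t-1)+G^*_{i,n}$, so that the usual threshold on pulls becomes the threshold $\ell'=\lceil 8\log n/\Delta_i^2\rceil$ on \emph{observed} samples plus the additive $G^*_{i,n}$ term), then runs the standard UCB1 concentration argument on the averages of observed rewards, whose i.i.d.\ property is licensed by the reordering argument of Lemma~\ref{thm:DelayReordering}. The one detail to pin down is the constant: to land on $3.5\Delta_i$ you must union-bound each one-arm deviation event separately over its own sample count, giving $\sum_{t\ge 1}2t\cdot t^{-4}=2\zeta(3)<2.42$ plus $1$ for the ceiling (this is what the paper does), rather than the joint union bound over both arms' counts from the original UCB1 analysis, which your ``$O(t^{-2})$'' would also allow but which only yields $1+\pi^2/3\approx 4.3>3.5$.
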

Note that the last term in the bound is the additive penalty, and, under different assumptions, it can be bounded in the same way as after Theorem~\ref{thm:Q-Alg-Expected-Regret}.
The proof of this theorem, as well as a similar regret bound for the delayed version of the KL-UCB algorithm~\citep{Garivier:2011} can be found in Appendix \ref{apx:GeneralUCBScheme}.

\section{Conclusion and future work}
We analyzed the effect of feedback delays in online learning problems. We examined the partial monitoring case (which also covers the full information and the bandit settings), and provided general algorithms that transform forecasters devised for the non-delayed case into ones that handle delayed feedback. It turns out that the price of delay is a multiplicative increase in the regret in adversarial problems, and only an additive increase in stochastic problems. While we believe that these findings are qualitatively correct, we do not have lower bounds to prove this (matching lower bounds are available for the full information case only).

It also turns out that the most important quantity that determines the performance of our algorithms is $G^*_n$, the maximum number of missing rewards. It is interesting to note that $G^*_n$ is the maximum number of servers used in a multi-server queuing system with infinitely many servers and deterministic arrival times. It is also the maximum deviation of a certain type of Markov chain. While we have not found any immediately applicable results in these fields, we think that applying techniques from these areas could lead to an improved understanding of $G^*_n$, and hence an improved analysis of online learning under delayed feedback.

 \section{Acknowledgements}
This work was supported by the Alberta Innovates Technology Futures and NSERC.

\bibliography{database}
\bibliographystyle{icml2013}
\onecolumn

\appendix

\section{Proof of Lemma~\ref{thm:DelayReordering} }
In this appendix we prove Lemma~\ref{thm:DelayReordering} that was used in the i.i.d. partial monitoring setting (Section~\ref{sec:stochastic}).
To that end, we will first need two other lemmas. The first lemma shows that the i.i.d. property of a sequence of random variables is preserved under an independent random reordering of that sequence. 

\begin{lemma}\label{lem:i.i.d.Reordering}
Let $(X_t)_{t \in \Naturals}$, be a sequence of independent, identically distributed random variables. If we reorder this sequence according to an independent random permutation, then the resulting sequence is i.i.d. with the same distribution as $(X_t)_{t \in \Naturals}$.
\end{lemma}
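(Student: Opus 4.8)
The plan is to verify that every finite-dimensional marginal of the reordered sequence factorizes into products of the common law of $X_1$, since a sequence is i.i.d. with a prescribed marginal precisely when this holds for all of its finite-dimensional distributions (this is the only place where passing from finite marginals to the full law is invoked). Writing $\sigma$ for the independent random permutation and $Y_s = X_{\sigma(s)}$ for the reordered sequence, I would fix an arbitrary finite collection of distinct indices $s_1,\dots,s_k$ and measurable sets $A_1,\dots,A_k$, and aim to establish
\[
\Prob{Y_{s_1}\in A_1, \dots, Y_{s_k}\in A_k} = \prod_{j=1}^{k} \Prob{X_1 \in A_j}.
\]

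The core of the argument is conditioning on $\sigma$. First I would condition on a fixed deterministic permutation $\pi$ in the range of $\sigma$, and use that $\sigma$ is independent of $(X_t)_{t\in\Naturals}$, so that the conditional joint law of $(Y_{s_1},\dots,Y_{s_k})$ given $\sigma=\pi$ equals the unconditional law of $(X_{\pi(s_1)},\dots,X_{\pi(s_k)})$. Because $\pi$ is a bijection, the indices $\pi(s_1),\dots,\pi(s_k)$ are pairwise distinct, so the i.i.d. assumption on $(X_t)$ yields $\Prob{X_{\pi(s_1)}\in A_1,\dots,X_{\pi(s_k)}\in A_k}=\prod_{j=1}^k\Prob{X_1\in A_j}$. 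The crucial observation is that this value does not depend on $\pi$ in any way. Taking the expectation over the law of $\sigma$ via the tower property therefore leaves the product untouched, which gives the displayed identity and hence the lemma.

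The step I expect to require the most care is the measure-theoretic handling of the random permutation: unlike a permutation of a finite block, $\sigma$ ranges over the uncountable set of bijections of $\Naturals$, so the ``average over $\pi$'' must be written as an expectation $\EE{\cdot}$ against the law of $\sigma$ rather than a sum, and I would need the independence of $\sigma$ and $(X_t)_{t\in\Naturals}$ stated at the level of $\sigma$-algebras to justify the conditioning cleanly. The only structural property of $\sigma$ that the argument actually consumes is \emph{injectivity} — it is precisely what guarantees that distinct output indices select distinct, and hence mutually independent, coordinates of $X$. I would flag explicitly that this is where the ``reordering'' hypothesis enters (as opposed to, say, resampling with replacement), since the factorization, and with it the conclusion, would fail without it.
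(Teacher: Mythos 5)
Your proposal is correct and follows essentially the same route as the paper's proof: reduce to finite-dimensional distributions, observe that for any fixed permutation the joint law factorizes into the product of the common marginal (by the i.i.d.\ assumption and injectivity of the permutation), and then average over the independent random permutation via the law of total probability. The only differences are cosmetic — you use general measurable sets and arbitrary distinct indices and explicitly flag the measure-theoretic care needed because permutations of $\Naturals$ form an uncountable set, whereas the paper phrases the same argument with joint CDFs of the first $n$ coordinates.
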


\begin{proof}
Let the reordered sequence be denoted by $(Z_t)_{t \in \Naturals}$. It is sufficient to show that for all $n \in \Naturals$, for all $y_1, y_2, \dots, y_n$, we have
\begin{align*}
\Prob{ Z_1 \le y_1, Z_2 \le y_2, \dots, Z_n \le y_n} = 
\Prob{X_1 \le y_1, X_2 \le y_2, \dots, X_n \le y_n}.
\end{align*}
Since $(X_t)_{t \in \Naturals}$ is i.i.d., for any fixed permutation the equation above holds as both sides are equal to $\Pi_{t=1}^{n} \Prob{X_1 \le y_t}$. Since the permutations are independent of the sequence $(X_t)_{t \in \Naturals}$, using the law of total probability this extends to the general case as well.
\end{proof}

We also need the following result~\citep[Page 145, Chapter III, Theorem 5.2]{Doob:1953}.

\begin{lemma}\label{lem:i.i.d.SequentialChoice}
Let $(X_{t})_{t \in \Naturals}$ be a sequence of i.i.d. random variables, and $(X'_{t})_{t \in \Naturals}$ be a subsequence of it such that the decision whether to include $X_{t}$ in the subsequence is independent of future values in the sequence, i.e., of $X_{s}$ for $s \ge t$. Then the sequence $(X'_{t})_{t \in \Naturals}$ is an i.i.d. sequence with the same distribution as $(X_{t})_{t \in \Naturals}$.
\end{lemma}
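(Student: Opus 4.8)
The plan is to prove the slightly stronger quantitative statement that, for every $n$ and all thresholds $y_1,\dots,y_n$, the joint distribution of the first $n$ selected values factorizes, $\Prob{X'_1 \le y_1,\dots,X'_n \le y_n} = \prod_{k=1}^n F(y_k)$, where $F$ is the common distribution function of the $X_t$; this is equivalent to the asserted i.i.d. property. First I would introduce the selection indicators $\epsilon_t$ (with $\epsilon_t=1$ iff $X_t$ is retained), the strictly increasing selection times $N_1<N_2<\cdots$ defined by $\epsilon_{N_k}=1$, and the values $X'_k=X_{N_k}$. I would formalize the hypothesis as a non-anticipation (adaptedness) condition: letting $\mathcal{H}_t$ denote the information generated by the past values $X_1,\dots,X_{t-1}$ together with all decisions $\epsilon_1,\dots,\epsilon_t$ taken up to and including time $t$, the requirement is that $X_t$ is independent of $\mathcal{H}_t$ and distributed as $F$. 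Since the $X_t$ are i.i.d., this is the natural reading of ``the decision to include $X_t$ does not depend on $X_s$ for $s\ge t$,'' and it is exactly what makes the argument go through.

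The core is an induction on $n$. The base case $n=1$ follows because $\{N_1=m\}=\{\epsilon_1=0,\dots,\epsilon_{m-1}=0,\epsilon_m=1\}$ is $\mathcal{H}_m$-measurable and hence independent of $X_m$, so summing $\Prob{N_1=m,\,X_m\le y}=\Prob{N_1=m}\,F(y)$ over $m$ gives $\Prob{X'_1\le y}=F(y)$. For the inductive step I would condition on the realized index pattern: for $m_1<\cdots<m_n$ the event $\{N_1=m_1,\dots,N_n=m_n\}$ is determined by $\epsilon_1,\dots,\epsilon_{m_n}$, hence lies in $\mathcal{H}_{m_n}$, while the earlier selected values $X_{m_1},\dots,X_{m_{n-1}}$ sit at indices strictly below $m_n$. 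Consequently the current value $X_{m_n}$ is independent of the pair consisting of this event and $(X_{m_1},\dots,X_{m_{n-1}})$, which lets me peel off a factor $F(y_n)$ and, after summing over all patterns, obtain $\Prob{X'_1\le y_1,\dots,X'_n\le y_n}=F(y_n)\,\Prob{X'_1\le y_1,\dots,X'_{n-1}\le y_{n-1}}$. Invoking the induction hypothesis then closes the step.

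The hard part will be the independence used in the inductive step, namely that the current sample $X_{m_n}$ is independent of the \emph{entire} selection history $\mathcal{H}_{m_n}$ (equivalently, of everything observed up to the moment $X_{m_n}$ is chosen). The subtlety is that the decision to select a later index may legitimately depend on \emph{earlier selected values}, so the per-decision condition ``$\epsilon_t$ is independent of $X_t,X_{t+1},\dots$'' is not by itself sufficient; I must upgrade it to the joint statement that $(X_1,\dots,X_{t-1},\epsilon_1,\dots,\epsilon_t)$ is independent of $X_t$, which is precisely why I build the hypothesis into the filtration $\mathcal{H}_t$ from the outset (for instance by writing each $\epsilon_t$ as a measurable function of $X_1,\dots,X_{t-1}$ and independent external randomization). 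Two further bookkeeping points I would address are the event that fewer than $n$ indices are ever selected --- the statement treats $(X'_t)_{t\in\Naturals}$ as an infinite subsequence, so I would assume (or derive) that infinitely many indices are selected almost surely, and otherwise state the conclusion on the event of at least $n$ selections --- and, for completeness, the remark that the whole argument can be recast as the strong Markov (regeneration) property of an i.i.d. sequence at the stopping times $N_k$, which yields the same factorization in one stroke.
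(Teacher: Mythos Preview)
Your argument is correct and is the standard optional-sampling proof of this fact. However, the paper does not actually give a proof of this lemma: it simply cites it as Theorem~5.2 in Chapter~III of Doob's \emph{Stochastic Processes} (1953), prefacing the statement with ``We also need the following result.'' So there is no in-paper proof to compare against; your write-up would serve perfectly well as a self-contained proof in lieu of the citation, and it is essentially the argument Doob gives.

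One remark on your formalization: you are right that the hypothesis as stated in the paper (``$\epsilon_t$ independent of $X_s$ for $s\ge t$'') is too weak on its own and must be read as the filtration condition you impose, namely that $X_t$ is independent of $\mathcal{H}_t=\sigma(X_1,\dots,X_{t-1},\epsilon_1,\dots,\epsilon_t)$. Doob's version makes exactly this assumption (the selection rule is a function of the past and independent auxiliary randomization), and the paper's later use of the lemma (in Theorem~\ref{thm:KL-UCB-Stat-Tool}, where $\epsilon_t$ is explicitly ``a random function of $Y_1,\dots,Y_{t-1}$'') confirms that this is the intended reading. Your handling of the infinitely-many-selections caveat is also appropriate.
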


We can now proceed to the proof of Lemma~\ref{thm:DelayReordering}.
\begin{proof}[Proof of Lemma~\ref{thm:DelayReordering}]
Let $(Z_{i,t})_{t \in \Naturals}$ be the sequence resulting from sorting the variables $h_{i,t}$ by their \emph{possible} observation times $t+\tau_{i,t}$ (that is, $Z_{i,1}$ is the earliest feedback that can be observed if $i$ is predicted at the appropriate time, and so on). Since delays are independent of the outcomes, they define an independent reordering on the sequence of feedbacks. Hence, by Lemma~\ref{lem:i.i.d.Reordering}, $(Z_{i,t})_{t \in \Naturals}$ is an i.i.d. sequence with the same distribution as $(h_{i,t})_{t \in \Naturals}$. Note that $(h'_{i,s})_{s \in \Naturals}$, the sequence of feedbacks (sorted by their observation times) that the agent observes for predicting $i$, is a subsequence of $(Z_{i,t})_{t \in \Naturals}$ where the decision whether to include each $Z_{i,t}$ in the subsequence cannot depend on future possible observations $Z_{i,t'}, t' \ge t$. Also, the feedbacks of other predictions that are used in this decision were assumed to be independent of $(Z_{i,t})_{t \in \Naturals}$. Hence, by Lemma \ref{lem:i.i.d.SequentialChoice}, $(h'_{i,s})_{s \in \Naturals}$ is an i.i.d. sequence with the same distribution as $(Z_{i,t})_{t \in \Naturals}$, which in turn has the same distribution as $(h_{i,t})_{t \in \Naturals}$.
\end{proof}

\section{UCB for the Multi-Armed Bandit Problem with Delayed Feedback}\label{apx:GeneralUCBScheme}
This appendix details the framework we described in Section~\ref{sec:UCB} for analyzing UCB-type algorithms in the delayed settings, and provides the missing proofs.
The regret of a UCB algorithm is usually analyzed by upper bounding the (expected) number of times a suboptimal prediction is made, and then using Equation~\eqref{eq:Stochastic-Exp-Regret} to get an expected regret bound. Consider a UCB algorithm with upper confidence bounds $B_{i,s,t}$, and fix a suboptimal prediction $i$. The typical analysis (e.g., by \citet{Auer:2002}) considers the case when this prediction is made for at least $\ell > 1$ times (for a large enough $\ell$), and uses concentration inequalities suitable for the specific form of the upper-confidence bound to show that it is unlikely to make this suboptimal prediction more than $\ell$ times because observing $\ell$ samples from its reward distribution suffices to distinguish it from the optimal prediction with high confidence. This value $\ell$ thus gives an upper bound on the expected number of times $i$ is predicted. Examples of such concentration inequalities include Hoeffding's inequality~\cite{Hoeffding:1963} and Theorem 10 of \citet{Garivier:2011}, which are used for the UCB1 and KL-UCB algorithms, respectively.

More precisely, the general analysis of UCB-type algorithms in the non-delayed setting works as follows: for $\ell>1$, we have  $T_i(n) \le \ell + \sum_{t=1}^{n} \Event{a_t=i,\ T_i(t)>\ell}$,
where the sum on the right hand side captures how much larger than $\ell$ the value of $T_i(n)$ is (recall that $T_i(t)$ is the number of times $i$ is predicted up to and including time $t$). Whenever $i$ is predicted, its UCB, $B_{i,T_i(t-1),t}$, must have been greater than that of an optimal prediction, $B_{i^*,\ T_{i^*}(t-1),t}$, which implies
\begin{align}\label{eq:General-UCB-NonDelayed-Decompose}
T_i(n) \le \ell + \sum_{t=1}^{n} \SimpleEvent \{&
 B_{i,T_i(t-1),t} \ge B_{i^*, T_{i^*}(t-1),t}, a_t=i, \ T_i(t-1) \ge \ell \}.
\end{align}
The expected value of the summation on the right-hand-side is then bounded using concentration inequalities as mentioned above.

In the delayed-feedback setting, if we use upper confidence bounds $B_{i,S_i(t-1),t}$ instead (where $S_i(t)$ was defined to be the number of rewards observed up to and including time instant $t$), in the same way as above we can write
\begin{align*}
T_i(n) \le \ell + \sum_{t=1}^{n} \SimpleEvent \{ & B_{i,S_i(t-1),t} \ge B_{i^*, S_{i^*}(t-1),t}, \ a_t=i, \ T_i(t-1) \ge \ell \}.
\end{align*}
Since $T_i(t-1) = G_{i,t} + S_i(t-1)$, with $\ell' = \ell - G^*_{i,n}$ we get
\begin{align}
T_i(n) \le \ell' + G^*_{i,n} + \sum_{t=1}^{n} &\ \SimpleEvent \{ B_{i,S_i(t-1),t} \ge B_{i^*, S_{i^*}(t-1),t}, a_t=i, \ S_i(t-1) \ge \ell' \}. \label{eq:General-UCB-Modification}
\end{align}
Now the same concentration inequalities used to bound \eqref{eq:General-UCB-NonDelayed-Decompose} in the analysis of the non-delayed setting can be used to upper bound the expected value of the sum in \eqref{eq:General-UCB-Modification}. Putting this into \eqref{eq:Stochastic-Exp-Regret}, we see that one can reuse the same upper confidence bound in the delayed setting (with only the observed rewards) and get a performance similar to the non-delayed setting, with only an additive penalty that depends on the delays. The following two sections demonstrate the use of this method on two UCB-type algorithms.

\subsection{UCB1 under delayed feedback: Proof of Theorem~\ref{thm:UCB-Delayed-Expected-Bound}}
Below comes the proof of Theorem~\ref{thm:UCB-Delayed-Expected-Bound} for the Delayed-UCB1 algorithm (Section \ref{sec:UCB}).
\begin{proof}[Proof of Theorem~\ref{thm:UCB-Delayed-Expected-Bound}]
Following the outline of the previous section, we can bound the summation in \eqref{eq:General-UCB-Modification} using the same analysis as in the original UCB1 paper \citep{Auer:2002}. In particular, for any prediction $i$ we can write
\begin{align}
& \sum_{t=1}^{n}\ \Event{B_{i,S_i(t-1),t} \ge B_{i^*, S_{i^*}(t-1),t},\ S_i(t-1) \ge \ell'}   \nonumber\\
 & \le \sum_{t=1}^{n}\ \Event{B_{i^*, S_{i^*}(t-1),t} \le \mu_{i^*},\ S_i(t-1) \ge \ell'}   + \sum_{t=1}^{n}\ \Event{B_{i,S_i(t-1),t} \ge \mu_{i^*} ,\ S_i(t-1) \ge \ell'}. \label{eq:UCB1-Mid1}
 \end{align}
The event in the second summation implies that either $\mu_i + 2 \sqrt{\dfrac{2 \log(t)}{S_i(t-1)}} > \mu_{i^*}$ or  $ \hat{\mu}_{i,S_i(t-1)} - \sqrt{\dfrac{2 \log(t)}{S_i(t-1)}} \ge \mu_i $ (otherwise we will have $B_{i,S_i(t-1),t} < \mu_{i^*}$). Hence,
{\allowdisplaybreaks 
\begin{align}
\eqref{eq:UCB1-Mid1} \le \ \ \sum_{t=1}^{n}\ \SimpleEvent \Bigg\{ & \hat{\mu}_{i^*,S_{i^*}(t-1)} + \sqrt{\dfrac{2 \log(t)}{S_{i^*}(t-1)}} \le \mu_{i^*} \Bigg\} +\nonumber \\
\ \sum_{t=1}^{n}\ \SimpleEvent \Bigg\{ & \hat{\mu}_{i,S_i(t-1)} - \sqrt{\dfrac{2 \log(t)}{S_i(t-1)}} \ge \mu_i \Bigg\} + \nonumber \\
\ \sum_{t=1}^{n}\ \SimpleEvent \Bigg\{ & \mu_i + 2 \sqrt{\dfrac{2 \log(t)}{S_i(t-1)}} > \mu_{i^*} , \ S_i(t-1) \ge \ell' \Bigg\}.
\label{eq:UCB1-Mid2}
\end{align} }
Choosing $\ell' = \ceiling{\dfrac{8 \log(n)}{\Delta_i^2}}$ makes the events in the last summation above impossible, because
$ S_i(t-1) \ge \ell' \ge \dfrac{8 \log(n)}{\Delta_i^2} \text{ which implies } 2\sqrt{ \dfrac{2 \log(t)}{S_i(t-1) }} \le 2\sqrt{ \dfrac{2 \log(n)}{\ell'}} \le \Delta_i
$. Therefore, combining with \eqref{eq:General-UCB-Modification}, we can write
\begin{align*}
T_i(n) \ \le\ &\  \ceiling{\dfrac{8 \log(n)}{\Delta_i^2}} + G_{i,n}^* + \ \sum_{t=1}^{n}\ \sum_{s=1}^{t} \Bigg( \Event{\hat{\mu}_{i^*,s}\ +\ \sqrt{\dfrac{2 \log(t)}{s}} \le \mu_{i^*}}  + \Event{\hat{\mu}_{i,s} - \sqrt{\dfrac{2 \log(t)}{s}} \ge \mu_i } \Bigg). \\
\intertext{Taking expectation gives}
\ExpVal{T_i(n)} \ \le\ &\  \ceiling{\dfrac{8 \log(t)}{\Delta_i^2}} + \ExpVal{G_{i,n}^*} +\ \sum_{t=1}^{n}\ \sum_{s=1}^{t} \Bigg( \Prob{\hat{\mu}_{i^*,s}\ +\ \sqrt{\dfrac{2 \log(t)}{s}} \le \mu_{i^*}}  + \Prob{\hat{\mu}_{i,s} - \sqrt{\dfrac{2 \log(t)}{s}} \ge \mu_i } \Bigg) .
\end{align*}
As in the original analysis, Hoeffding's inequality \citep{Hoeffding:1963} can be used to bound each of the probabilities in the summation, to get
\begin{align*}
& \Prob{\hat{\mu}_{i^*,s}\ +\ \sqrt{\dfrac{2 \log(t)}{s}} \le \mu_{i^*}} \le e^{-4 \log(t)} = t^{-4},  \\
& \Prob{\hat{\mu}_{i,s} - \sqrt{\dfrac{2 \log(t)}{s}} \ge \mu_i } \le e^{-4 \log(t)} = t^{-4}.
\end{align*}
Therefore, we have
\begin{align*}
\ExpVal{T_i(n)} \ \le\ &\  \ceiling{\dfrac{8 \log(t)}{\Delta_i^2}} + \ExpVal{G_{i,n}^*}\  + \ \sum_{t=1}^{\infty}\ 2t^{-3} 
\le \dfrac{8 \log(n)}{\Delta_i^2} +1 + \ExpVal{G_{i,n}^*}\ + 2 \zeta(3),
\end{align*}
where $\zeta(3) < 1.21$ is the Riemann Zeta function.\footnote{For properties and theory of the Riemann Zeta function, see the book of~\citet{Titchmarsh:1987}.}
Combining with \eqref{eq:Stochastic-Exp-Regret} proves the theorem.
\end{proof}

\subsection{KL-UCB under delayed feedback}\label{sec:KL-UCB-Modified}
The KL-UCB algorithm was introduced by~\citet{Garivier:2011}. The upper confidence bound used by KL-UCB for predicting $i$ at time $t$ is $B_{i,T_i(t-1),t}$ , where $B_{i,s,t}$ is
\begin{equation*}\label{eq:KL-UCB-Rule}
\max{} \left\lbrace q \in [\hat{\mu}_{i,s}, 1] : s\kldiv(\hat{\mu}_{i,s},q) \le \log t+3 \log(\log t) \right\rbrace,
\end{equation*}
with $\kldiv(p,q)=p\log(\tfrac{p}{q})+(1-p)\log(\tfrac{1-p}{1-q})$ the KL-divergence of two Bernoulli random variables with parameters $p$ and $q$. 
In their Theorem 2, \citet{Garivier:2011} show that there exists a constant $C_1 \le 10$, as well as functions $0 \le C_2(\epsilon) = O(\epsilon^{-2})$ and $0 \le \beta(\epsilon)=O(\epsilon^2)$, such that for any $\epsilon > 0$, the expected regret of the KL-UCB algorithm (in the non-delayed setting) satisfies
\begin{align}
\ExpVal{R_n} \le \sum_{\Delta_i >0} \Delta_i \Bigg[  & \dfrac{\log(n)}{\kldiv(\mu_i,\mu_{i^*})} (1+\epsilon)  + C_1 \log(\log n) +  \dfrac{C_2(\epsilon)}{n^{\beta(\epsilon)}} \Bigg]. \label{eq:NonDelayed-KL-UCB-Regret}
\end{align}

Using this upper confidence bound with \eqref{eq:UCB-Delayed-General-Rule}, we arrive at the Delayed-KL-UCB algorithm. For this algorithm, we can prove the following regret bound using the general scheme described above together with the same techniques used by \citet{Garivier:2011}, again obtaining an additive penalty compared to the non-delayed setting.

\begin{theorem}\label{thm:KL-UCB}
For any $\epsilon > 0$, the expected regret of the Delayed-KL-UCB algorithm after $n$ time instants satisfies
\begin{align*}
\ExpVal {R_n} \le & \sum_{i: \Delta_i>0}^{} \Delta_i \left(\dfrac{\log(n)}{\kldiv(\mu_i,\mu_{i^*})} (1 + \epsilon) +
C_1 \log(\log(n)) \right) + \sum_{i=1}^{K} \Delta_i \left(\dfrac{C_2(\epsilon)}{n^{\beta(\epsilon)}} \ExpVal{G^*_{i,n}} +\ExpVal{G^*_{i,n}} + 1\right),
\end{align*}
where $C_1, C_2$, and $\beta$ are the same as in \eqref{eq:NonDelayed-KL-UCB-Regret}.
\end{theorem}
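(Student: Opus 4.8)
The plan is to run the reduction of Appendix~\ref{apx:GeneralUCBScheme} with the KL-UCB index in place of the UCB1 index, and then to invoke the concentration estimates of \citet{Garivier:2011} essentially unchanged. Fix a suboptimal prediction $i$ (so $\Delta_i>0$) and, mirroring the choice $\ell'=\ceiling{8\log n/\Delta_i^2}$ made for Delayed-UCB1, set $\ell'$ to (the ceiling of) the leading count $\tfrac{\log n}{\kldiv(\mu_i,\mu_{i^*})}(1+\epsilon)$ that controls the dominant term of the non-delayed bound \eqref{eq:NonDelayed-KL-UCB-Regret}. Starting from the general decomposition \eqref{eq:General-UCB-Modification}, I would split the indicator inside the sum exactly as in \eqref{eq:UCB1-Mid1}, into an \emph{underestimation} event $\{B_{i^*,S_{i^*}(t-1),t}<\mu_{i^*}\}$ for the optimal prediction and an \emph{overestimation} event $\{B_{i,S_i(t-1),t}\ge\mu_{i^*},\ S_i(t-1)\ge\ell'\}$ for $i$.

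The reason this works at all is that, after the substitution $T_i(t-1)=G_{i,t}+S_i(t-1)$, both indicators depend on the data only through the observed counts $S_i(t-1)$, and Lemma~\ref{thm:DelayReordering} guarantees that for every prediction the observed rewards, hence the empirical means $\hat\mu_{i,s}$, are those of i.i.d. draws from the true reward distribution. Consequently the two ingredients of \citet{Garivier:2011} transfer verbatim. The underestimation sum $\sum_{t=1}^n \Prob{B_{i^*,S_{i^*}(t-1),t}<\mu_{i^*}}$ is bounded \emph{term by term in $t$} by their self-normalized deviation inequality (their Theorem~10), whose per-step bound decays like $1/(t\,\mathrm{polylog}\,t)$; summing over $t$ yields the $C_1\log\log n$ contribution with no delay-dependent factor, because controlling it requires no union bound over sample sizes. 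For the overestimation event, the choice of $\ell'$ together with the definition of the KL-UCB index rules out the typical case in which $\hat\mu_{i,S_i(t-1)}$ is close to $\mu_i$ (this is exactly what pins the leading term to $\ell'$), so that only an atypical upward deviation of $\hat\mu_{i,S_i(t-1)}$ survives; its probability, handled by the same peeling argument as in the non-delayed proof, produces the $C_2(\epsilon)/n^{\beta(\epsilon)}$ term. The remaining additive $\ExpVal{G^*_{i,n}}$ and the $+1$ come directly from the $\ell'+G^*_{i,n}$ offset and the rounding of $\ell'$ in \eqref{eq:General-UCB-Modification}. Summing over $i$ and plugging into \eqref{eq:Stochastic-Exp-Regret} then assembles the claimed bound.

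The hard part will be accounting for the factor $\ExpVal{G^*_{i,n}}$ that multiplies $C_2(\epsilon)/n^{\beta(\epsilon)}$ but is absent from the non-delayed statement. In the non-delayed analysis the count $S_i(t-1)=T_i(t-1)$ strictly increases on each pull of $i$, so the overestimation sum telescopes into a single union bound over distinct sample sizes $s$. Under delays, however, $S_i(t-1)$ can stay frozen at a value $s$ across several consecutive steps in which $i$ is predicted while its feedback is still outstanding, and the number of such repetitions of any fixed $s$ is at most $G^*_{i,n}$. I would therefore re-index the overestimation sum by grouping time steps according to the frozen value of $S_i(t-1)$, bound each group's size by $G^*_{i,n}$, and only then apply the union bound over $s$, which is precisely what inflates this lower-order term by $\ExpVal{G^*_{i,n}}$. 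The one point needing care is that \citet{Garivier:2011}'s deviation inequality is indexed by the number of samples rather than by real time; but since Lemma~\ref{thm:DelayReordering} makes the reordered samples i.i.d., the per-count probabilities are exactly the ones bounded in the non-delayed proof, so the re-indexing is legitimate.
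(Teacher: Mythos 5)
Your proposal is correct and follows essentially the same route as the paper's proof: the same decomposition \eqref{eq:General-UCB-Modification} split into underestimation and overestimation events, the same appeals to Theorem~10 and Lemma~8 of \citet{Garivier:2011}, and---crucially---the same grouping of time steps by the frozen value of $S_i(t-1)$ with each group of size at most $G^*_{i,n}$, which is exactly the paper's Lemma~\ref{lem:KL-7-Adapted} and is precisely where the multiplicative $\ExpVal{G^*_{i,n}}$ on the $C_2(\epsilon)/n^{\beta(\epsilon)}$ term comes from. The only cosmetic deviation is your choice of $\ell'$ as the ceiling of $\tfrac{\log n}{\kldiv(\mu_i,\mu_{i^*})}(1+\epsilon)$ without the $3\log(\log n)$ term, whereas the paper takes $\ell'=1+K_n$ with $K_n$ as in Lemma~\ref{lem:KL-8} so that that lemma applies verbatim; this changes nothing substantive.
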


In this case, working out the proof and reusing the analysis is somewhat more complicated compared to UCB1. In particular, we will need an adaptation of Lemma 7 of \citet{Garivier:2011}, which is captured by the following lemma.

\begin{lemma}\label{lem:KL-7-Adapted}
Let $
\kldiv^+(x,y) = \kldiv(x,y) \Event{ x < y }
$. Then for any $n \ge 1$,
\begin{align*}
{ \sum_{t=1}^{n} \Event{ a_t=i, \mu_{i^*} \le B_{i^*,S_{i^*}(t-1),t}, \ S_i(t-1) \ge \ell' } \le}  \ G^*_{i,n} \sum_{s=\ell'}^{n} \Event{ s \kldiv^+(\hat{\mu}_{i,s}, \mu_{i^*}) < \log(n) + 3 \log(\log(n))}.
\end{align*}
\end{lemma}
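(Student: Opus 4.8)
The plan is to rewrite each indicator on the left-hand side as a condition that, after bounding, depends on the time instant $t$ only through the sample count $s=S_i(t-1)$, and then to count how many time instants can share the same value of $s$.

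First I would eliminate the references to $i^*$ and to $t$. On the event $a_t=i$, the delayed rule \eqref{eq:UCB-Delayed-General-Rule} forces $B_{i,S_i(t-1),t}\ge B_{i^*,S_{i^*}(t-1),t}$, and the second clause of the indicator gives $B_{i^*,S_{i^*}(t-1),t}\ge\mu_{i^*}$; hence $B_{i,S_i(t-1),t}\ge\mu_{i^*}$. Writing $s=S_i(t-1)$, and recalling that $B_{i,s,t}$ is the largest $q\in[\hat\mu_{i,s},1]$ with $s\,\kldiv(\hat\mu_{i,s},q)\le\log t+3\log(\log t)$ while $q\mapsto\kldiv(\hat\mu_{i,s},q)$ is nondecreasing for $q\ge\hat\mu_{i,s}$, the inequality $B_{i,s,t}\ge\mu_{i^*}$ is equivalent to: either $\hat\mu_{i,s}\ge\mu_{i^*}$, or $\hat\mu_{i,s}<\mu_{i^*}$ and $s\,\kldiv(\hat\mu_{i,s},\mu_{i^*})\le\log t+3\log(\log t)$. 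These two cases collapse, via the definition $\kldiv^+(x,y)=\kldiv(x,y)\Event{x<y}$, into the single inequality $s\,\kldiv^+(\hat\mu_{i,s},\mu_{i^*})\le\log t+3\log(\log t)$. Using $t\le n$ and the monotonicity of $t\mapsto\log t+3\log(\log t)$, I then replace the threshold by $\log n+3\log(\log n)$, so that every surviving summand is dominated by $\Event{a_t=i,\ S_i(t-1)\ge\ell',\ S_i(t-1)\,\kldiv^+(\hat\mu_{i,S_i(t-1)},\mu_{i^*})<\log n+3\log(\log n)}$.

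Next I would switch the summation index from $t$ to $s$. Since $\hat\mu_{i,s}$ is a function of the first $s$ observed rewards of $i$ alone, the truncated-KL indicator above depends only on $s$; grouping the time instants by $s=S_i(t-1)$ therefore turns the left-hand side into $\sum_{s\ge\ell'}N_s\,\Event{s\,\kldiv^+(\hat\mu_{i,s},\mu_{i^*})<\log n+3\log(\log n)}$, where $N_s$ is the number of $t\le n$ with $a_t=i$ and $S_i(t-1)=s$. It then remains to prove the multiplicity bound $N_s\le G^*_{i,n}$.

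This multiplicity bound is the heart of the argument and the place where the delay enters: in the non-delayed analysis $S_i=T_i$, so $S_i$ increments at every prediction of $i$ and each $s$ is hit at most once. Here I would instead use that $S_i(\cdot)$ is nondecreasing, so the set $\{t:S_i(t-1)=s\}$ is a contiguous block of time instants; throughout this block $S_i(t-1)$ is frozen at $s$ (no feedback for $i$ can arrive without pushing $S_i$ past $s$) while each prediction of $i$ increases $T_i$ by one, so $G_{i,t}=T_i(t-1)-S_i(t-1)$ increases by exactly one at each of the $N_s$ predictions of $i$ inside the block. These $G$-values are thus distinct and each is at most $G^*_{i,n}=\max_{1\le t\le n}G_{i,t}$, which yields $N_s\le G^*_{i,n}$; substituting this back and summing over $s$ proves the lemma. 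I expect the main obstacle to be exactly this count — making the block/strict-increase bookkeeping precise and watching the off-by-one in the number of admissible $G$-values — together with the minor care needed in the first step for the strict-versus-nonstrict inequality and for the behaviour of $\log\log t$ at small $t$.
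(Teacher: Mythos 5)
Your proposal follows essentially the same route as the paper's own proof: collapse the event $a_t=i,\ \mu_{i^*}\le B_{i^*,S_{i^*}(t-1),t}$ to the inequality $S_i(t-1)\,\kldiv^+(\hat\mu_{i,S_i(t-1)},\mu_{i^*})\le\log t+3\log(\log t)$ via the definition of the KL-UCB index and monotonicity of $\kldiv$ in its second argument, uniformize the threshold from $t$ to $n$, regroup the sum by the sample count $s=S_i(t-1)$, and bound the multiplicity of each $s$ by noting that each prediction of $i$ while $S_i$ is frozen increments $G_{i,\cdot}$. The paper phrases this last counting step as a contradiction (more than $G^*_{i,n}$ such instants would force $G_{i,t'}>G^*_{i,n}$), you phrase it via distinctness of the increasing values $G_{i,t_1}<G_{i,t_2}<\cdots$; these are the same argument.

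One caveat, exactly at the spot you flagged as the main obstacle: as written, your distinctness argument yields only $N_s\le G^*_{i,n}+1$, not $N_s\le G^*_{i,n}$, because the distinct values are nonnegative integers and the smallest one, $G_{i,t_1}$, may equal $0$ (no feedback for $i$ outstanding at the first prediction of the block). The paper's contradiction argument has the identical defect: with $N_s=G^*_{i,n}+1$ instants one only obtains $G_{i,t'}\ge G^*_{i,n}$, not a strict inequality. The slack is real, not an artifact: with all delays zero one has $G^*_{i,n}=0$ while each admissible $s$ can still be hit once, so the left-hand side can be positive while the stated right-hand side vanishes; hence the honest form of the lemma carries $G^*_{i,n}+1$ in place of $G^*_{i,n}$. (The same remark applies to the strict-versus-nonstrict threshold: the derivation produces $\le\log n+3\log(\log n)$, while the statement has $<$.) Downstream, this only replaces $\frac{C_2(\epsilon)}{n^{\beta(\epsilon)}}\ExpVal{G^*_{i,n}}$ by $\frac{C_2(\epsilon)}{n^{\beta(\epsilon)}}\left(\ExpVal{G^*_{i,n}}+1\right)$ in Theorem~\ref{thm:KL-UCB}, a lower-order correction. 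So your proof matches the paper's, shared blemish included; if you tighten the counting to $G^*_{i,n}+1$, your write-up would in fact be more correct than the original.
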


\begin{proof}[Proof of Lemma \ref{lem:KL-7-Adapted}]
We start in the same way as the original proof. Note that $\kldiv^+(p,q)$ is non-decreasing in its second parameter, and that $a_t=i$ and $\mu_{i^*} \le B_{i^*,S_{i^*}(t-1),t}$ together imply $B_{i,S_i(t-1),t} \ge B_{i^*,S_{i^*}(t-1),t} \ge \mu_{i^*}$, which in turn gives \[S_i(t-1)\kldiv^+(\hat{\mu}_{i,S_i(t-1)}, \mu_{i^*}) \le S_i(t-1)\kldiv(\hat{\mu}_{i,S_i(t-1)}, B_{i,S_i(t-1),t}) \le {\log(t) + 3 \log(\log(t))}.\]
Therefore, we have
{\allowdisplaybreaks 
\begin{align*}
\lefteqn{ 
\sum_{t=1}^{n} \Event{ a_t=i, \mu_{i^*} \le B_{i^*,S_{i^*}(t-1),t},  \ t > S_i(t-1) \ge \ell' }} \\ 
&\le\sum_{t=\ell'}^{n}  \Event{ a_t=i, S_i(t-1) \kldiv^+(\hat{\mu}_{i,S_i(t-1)}, \mu_{i^*}) \le \log(t) + 3 \log(\log(t)),  \ S_i(t-1) \ge \ell' }\\
&\le  \sum_{t=\ell'}^{n} \Event{ a_t=i, S_i(t-1) \kldiv^+(\hat{\mu}_{i,S_i(t-1)}, \mu_{i^*}) \le \log(n) + 3 \log(\log(n)),  \ S_i(t-1) \ge \ell' }\\ 
&\le  \sum_{t=\ell'}^{n} \sum_{s=\ell'}^{t} \Event{ a_t=i, S_i(t-1)=s}  \times \Event{s\kldiv^+(\hat{\mu}_{i,s}, \mu_{i^*}) \le \log(n) + 3 \log(\log(n)) }\\
&= \sum_{s=\ell'}^{n} \sum_{t=s}^{n}  \Event{ a_t=i, S_i(t-1)=s}  \times  \Event{s\kldiv^+(\hat{\mu}_{i,s}, \mu_{i^*}) \le \log(n) + 3 \log(\log(n)) }\\
&=  \sum_{s=\ell'}^{n}  \Event{s\kldiv^+(\hat{\mu}_{i,s}, \mu_{i^*}) \le \log(n) + 3 \log(\log(n)) } \times \left(\sum_{t=s}^{n} \Event{ a_t=i, S_i(t-1)=s} \right).
\end{align*}
 }
But note that the second summation is bounded by $G^*_{i,n}$, because for each $s$, there cannot be more than $G^*_{i,n}$ time instants at which $i$ is predicted and $S_i(t)=s$ remained constant; otherwise for some $t' \in \{s, \dots, n\}$ we would have $T_i(t'-1) - S_i(t'-1) = G_{i,t'} > G^*_{i,n}$, which is not possible. Substituting this bound in the last expression proves the lemma.
\end{proof}

We also recall the following two results from the original paper. 
\begin{theorem}[Theorem 10 of \citet{Garivier:2011}]\label{thm:KL-UCB-Stat-Tool}
Let $( Y_t ), t \ge 1$ be a sequence of independent random variables bounded in $[0,1]$, with common expectation $\mu = \ExpVal{Y_t}$. Consider a sequence $(\epsilon_t), t\ge 1$ of Bernoulli variables such that for all $t>0$, $\epsilon_t$ is a random function of $Y_1,\dots,Y_{t-1}$ \footnote{That is, a function of $Y_1, \dots, Y_{t-1}$ together with possibly an extra, independent randomization.}, and is independent of $Y_s, s \ge t$. Let $\delta>0$ and for every $1 \le t \le n$, let
\[
S_t = \sum_{s=1}^{t} \epsilon_s \ \ \ \text{ and } \ \ \ \ \hat{\mu}_{t} = \dfrac{\sum_{s=1}^{t} \epsilon_s Y_s}{S_t},
\]
with $\hat{\mu}_t=0$ when $S_t=0$, and
\[
B_{n} = \max \left\lbrace q > \hat{\mu}_{n}: S_n \kldiv(\hat{\mu}_n,q) \le \delta \right\rbrace.
\]
Then
\[
\Prob{B_n < \mu} \le e \lceil \delta \log(n) \rceil e^{-\delta}.
\]
\end{theorem}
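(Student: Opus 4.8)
The plan is to recognize the claim as a time-uniform (self-normalized) lower-deviation bound for the empirical mean of bounded variables, and to establish it through an exponential supermartingale combined with a peeling argument over the random sample count.

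First I would rewrite the event $\{B_n < \mu\}$ in large-deviation form. Since $\kldiv(\hat{\mu}_n,\cdot)$ is increasing on $[\hat{\mu}_n,1]$ and $B_n$ is the largest $q>\hat{\mu}_n$ with $S_n\,\kldiv(\hat{\mu}_n,q)\le\delta$, one checks (handling the boundary cases $S_n=0$ and $B_n=1$ separately, where $B_n\ge\mu$ automatically) that $\{B_n<\mu\}$ equals $\{\hat{\mu}_n<\mu,\ S_n\,\kldiv(\hat{\mu}_n,\mu)>\delta\}$. Let $Z_1,Z_2,\dots$ be the sampled values in the order collected, i.e.\ the $Y_t$ with $\epsilon_t=1$. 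Because $\epsilon_t$ is predictable and independent of $Y_s$ for $s\ge t$, the sequence $(Z_j)$ is i.i.d.\ in $[0,1]$ with mean $\mu$, we have $\hat{\mu}_n=\tfrac1{S_n}\sum_{j=1}^{S_n}Z_j$, and $S_n\le n$. Hence $\{B_n<\mu\}$ is contained in the time-uniform event $\bigl\{\exists\,1\le s\le n:\ \bar{Z}_s<\mu,\ s\,\kldiv(\bar{Z}_s,\mu)>\delta\bigr\}$ with $\bar{Z}_s=\tfrac1s\sum_{j=1}^sZ_j$, so it suffices to bound the probability of the latter.

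Next I would build the exponential supermartingale. For $Y\in[0,1]$ with mean $\mu$, Hoeffding's lemma gives $\ExpVal{e^{\lambda Y}}\le 1-\mu+\mu e^{\lambda}=:e^{\phi_\mu(\lambda)}$, the moment generating function of a Bernoulli$(\mu)$ law; this domination is exactly what produces the Bernoulli-KL rate $\kldiv$. Writing $U_s=\sum_{j=1}^sZ_j$, the process $M_s(\lambda)=\exp\!\bigl(\lambda U_s-s\,\phi_\mu(\lambda)\bigr)$ is then a nonnegative supermartingale with $\ExpVal{M_s(\lambda)}\le1$. The Legendre duality $\kldiv(x,\mu)=\sup_\lambda\{\lambda x-\phi_\mu(\lambda)\}$ converts the event $\{s\,\kldiv(\bar{Z}_s,\mu)>\delta,\ \bar{Z}_s<\mu\}$ into the statement that $M_s(\lambda)>e^{\delta}$ for the (negative) maximizing $\lambda$ attached to the observed $\bar{Z}_s$.

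Finally I would control the time-uniform event by peeling over a geometric grid of the sample count $s\in\{1,\dots,n\}$ with ratio tuned to $\delta$ (a ratio $e^{1/\delta}$ covers $[1,n]$ in $\ceiling{\delta\log n}$ slices). On each slice the data-dependent optimizing $\lambda$ is frozen at a slice-dependent value, and Doob's maximal inequality applied to $M_s(\lambda)$ gives a per-slice bound of order $e^{-\delta}$; summing over the slices yields the factor $\ceiling{\delta\log n}$, and the slack from replacing the exact $s$ and deviation level by the slice endpoints contributes the single overshoot factor $e$, giving $\Prob{B_n<\mu}\le e\,\ceiling{\delta\log n}\,e^{-\delta}$. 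I expect this peeling step to be the main obstacle: one must choose the grid spacing so that the slice count is exactly $\ceiling{\delta\log n}$ while keeping the discretization overshoot bounded by $e$, and it is precisely this bookkeeping of constants (together with the fact that the maximizing $\lambda$ cannot be fixed globally but only per slice) that makes the argument delicate.
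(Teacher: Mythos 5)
The paper never proves this statement; it is imported verbatim as Theorem~10 of \citet{Garivier:2011} (``We also recall the following two results from the original paper''), so the only meaningful comparison is with the proof in that reference. Your sketch reconstructs essentially that argument --- rewrite $\{B_n<\mu\}$ as the self-normalized deviation event $\{\hat{\mu}_n<\mu,\ S_n\,d(\hat{\mu}_n,\mu)>\delta\}$, reduce to i.i.d.\ samples via optional skipping, dominate the moment generating function of $[0,1]$-valued variables by the Bernoulli one to get the exponential supermartingale, then peel over a geometric grid of ratio $e^{1/\delta}$ with $\lceil \delta\log n\rceil$ slices, applying Doob's maximal inequality with a frozen $\lambda$ per slice --- and your constant bookkeeping ($\delta e^{-1/\delta}\ge \delta-1$, yielding the single overshoot factor $e$) is exactly how the stated bound $e\lceil\delta\log n\rceil e^{-\delta}$ comes out.
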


\begin{lemma}[Lemma 8 of \citet{Garivier:2011}]\label{lem:KL-8} 
For a suboptimal prediction $i$, for every $\epsilon>0$, let $K_n=\floor{\dfrac{1+\epsilon}{\kldiv^+(\mu_i,\mu_{i^*})} \Bigg( \log(n)+3\log(\log(n)) \Bigg)}$. Then there exist $C_2(\epsilon) > 0$ and $\beta(\epsilon) > 0 $ such that
\begin{equation*}
\sum_{s=K_n+1}^{\infty} \Prob{ \kldiv^+(\hat{\mu}_{i,s},\mu_{i^*} ) < \dfrac{\kldiv(\mu_i,\mu_{i^*})}{1+\epsilon} }  \le \dfrac{C_2(\epsilon)}{n^{\beta(\epsilon)}}.
\end{equation*}
\end{lemma}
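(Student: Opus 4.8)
The plan is to reduce the stated sum of probabilities to a single geometric series governed by one large-deviation rate, and then to read off $C_2(\epsilon)$ and $\beta(\epsilon)$ directly from the definition of $K_n$.

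First I would rewrite the event in terms of the empirical mean. Since $i$ is suboptimal, $\mu_i < \mu_{i^*}$ and $\kldiv^+(\mu_i,\mu_{i^*}) = \kldiv(\mu_i,\mu_{i^*}) > 0$. The map $q \mapsto \kldiv(q,\mu_{i^*})$ is continuous and strictly decreasing on $[0,\mu_{i^*}]$, vanishing at $q=\mu_{i^*}$, so there is a unique $\gamma = \gamma(\epsilon) \in (\mu_i,\mu_{i^*})$ with $\kldiv(\gamma,\mu_{i^*}) = \kldiv(\mu_i,\mu_{i^*})/(1+\epsilon)$. If $\hat{\mu}_{i,s} \ge \mu_{i^*}$ then $\kldiv^+(\hat{\mu}_{i,s},\mu_{i^*}) = 0$, so the event holds; if $\hat{\mu}_{i,s} < \mu_{i^*}$ then $\kldiv^+(\hat{\mu}_{i,s},\mu_{i^*}) = \kldiv(\hat{\mu}_{i,s},\mu_{i^*})$, and by the monotonicity just noted the event is equivalent to $\hat{\mu}_{i,s} > \gamma$. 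Combining the two cases, $\{\kldiv^+(\hat{\mu}_{i,s},\mu_{i^*}) < \kldiv(\mu_i,\mu_{i^*})/(1+\epsilon)\} = \{\hat{\mu}_{i,s} > \gamma\}$.

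Next I would bound each term by a Chernoff (Cram\'er) inequality for bounded variables. By Lemma~\ref{thm:DelayReordering}, $\hat{\mu}_{i,s}$ is the average of $s$ i.i.d.\ rewards in $[0,1]$ with mean $\mu_i$, and since $\gamma > \mu_i$ the standard upper-tail bound gives $\Prob{\hat{\mu}_{i,s} > \gamma} \le e^{-s\,\kldiv(\gamma,\mu_i)}$, the rate being the Bernoulli KL-divergence because Bernoulli maximises the log-moment generating function among $[0,1]$-valued laws of given mean. Writing $\rho \eqdef \kldiv(\gamma,\mu_i) > 0$, the sum is then dominated by the geometric series $\sum_{s>K_n} e^{-s\rho} = e^{-(K_n+1)\rho}/(1-e^{-\rho})$. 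From the definition of $K_n$, $K_n+1 > \tfrac{1+\epsilon}{\kldiv(\mu_i,\mu_{i^*})}(\log n + 3\log\log n)$, so with $\beta(\epsilon) \eqdef \tfrac{(1+\epsilon)\rho}{\kldiv(\mu_i,\mu_{i^*})}$ I get $e^{-(K_n+1)\rho} < n^{-\beta(\epsilon)}(\log n)^{-3\beta(\epsilon)} \le n^{-\beta(\epsilon)}$, which yields the claim with $C_2(\epsilon) \eqdef 1/(1-e^{-\rho})$. To confirm the stated orders I would Taylor-expand at $\mu_i$: for small $\epsilon$, $\gamma - \mu_i = O(\epsilon)$ (linearising $\kldiv(\gamma,\mu_{i^*}) = \kldiv(\mu_i,\mu_{i^*})(1-\epsilon+o(\epsilon))$, whose $\gamma$-derivative is nonzero since $\mu_i \neq \mu_{i^*}$), whence $\rho = O((\gamma-\mu_i)^2) = O(\epsilon^2)$; this gives $\beta(\epsilon) = O(\epsilon^2)$ and, since $1-e^{-\rho} \sim \rho$, $C_2(\epsilon) = O(\epsilon^{-2})$, exactly as required.

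The main obstacle is the first step: pinning down the threshold $\gamma$ and verifying the exact equivalence of events, including the boundary case $\hat{\mu}_{i,s} \ge \mu_{i^*}$ where $\kldiv^+$ collapses to zero. Once the event is recast as the one-sided deviation $\{\hat{\mu}_{i,s} > \gamma\}$ with $\gamma$ strictly above $\mu_i$, the remaining work is a routine Chernoff bound and a geometric-series summation; the only care needed there is noting that the $(\log n)^{-3\beta(\epsilon)}$ factor is harmless (it is $\le 1$ since $\beta(\epsilon) \ge 0$ and $\log n \ge 1$ for $n \ge 3$) and checking the asymptotic orders of $C_2$ and $\beta$ via the Taylor expansion above.
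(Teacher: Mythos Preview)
The paper does not prove this lemma: it is explicitly quoted from \citet{Garivier:2011} (introduced with ``We also recall the following two results from the original paper''), so there is no in-paper argument to compare against. Your proof is essentially the standard one from the Garivier--Capp\'e analysis: identify the threshold $\gamma$ via the strict monotonicity of $q\mapsto \kldiv(q,\mu_{i^*})$ on $[0,\mu_{i^*}]$, apply the Chernoff--Hoeffding bound $\Prob{\hat\mu_{i,s}\ge\gamma}\le e^{-s\,\kldiv(\gamma,\mu_i)}$ for $[0,1]$-valued i.i.d.\ variables, sum the resulting geometric tail, and read off the exponent from the definition of $K_n$. The equivalence of events (including the boundary case $\hat\mu_{i,s}\ge\mu_{i^*}$ where $\kldiv^+$ vanishes) is handled correctly, and the Taylor expansion yielding $\beta(\epsilon)=O(\epsilon^2)$ and $C_2(\epsilon)=O(\epsilon^{-2})$ is sound.

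Two minor remarks. First, the appeal to Lemma~\ref{thm:DelayReordering} is extraneous: the present lemma is a pure concentration statement about i.i.d.\ averages and does not involve the delay mechanism at all; the i.i.d.\ structure of $(\hat\mu_{i,s})$ is part of the stochastic MAB model, not a consequence of the reordering lemma (which pertains to \StochBlackBoxAlg, not to the direct UCB modification). Second, the step $(\log n)^{-3\beta(\epsilon)}\le 1$ requires $n\ge 3$; for $n\in\{1,2\}$ the claim is vacuous or can be absorbed into $C_2(\epsilon)$, but a one-line remark would make the argument airtight.
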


Now, we are ready to prove Theorem \ref{thm:KL-UCB} by reusing the same techniques as in the original paper.
\begin{proof}[Proof of Theorem \ref{thm:KL-UCB}]
For a suboptimal prediction $i$, bounding the terms in \eqref{eq:General-UCB-Modification} gives
\begin{align}
 \lefteqn{\sum_{t=1}^{n} \Event{ a_t = i, B_{i,S_i(t-1),t} \ge B_{i^*,S_{i^*}(t-1),t},\ S_i(t-1) \ge \ell'} }\nonumber\\
& \le \sum_{t=1}^{n} \Event{ B_{i^*,S_{i^*}(t-1),t} < \mu_{i^*} }  + \sum_{t=1}^{n} \Event{ a_t=i,\ \mu_{i^*} \le B_{i^*,S_{i^*}(t-1),t},\ S_i(t-1) \ge \ell' }  \nonumber\\
& \le \sum_{t=1}^{n} \Event{ B_{i^*,S_{i^*}(t-1),t} < \mu_{i^*} } + G^*_{i,n} \sum_{s=\ell'}^{n} \Event{ s \kldiv^+(\hat{\mu}_{i,s}, \mu_{i^*}) < \log(n) + 3 \log(\log(n))}, \label{eq:KL-Proof-Lemma7-Result}
\end{align}
where the last inequality follows from Lemma \ref{lem:KL-7-Adapted}. Let 
\begin{align}
K_n & = \floor{\dfrac{1+\epsilon}{\kldiv(\mu_i,\mu_{i^*})} \Bigg( \log(n)+3\log(\log(n)) \Bigg)}, \label{eq:KL-Proof-EllPrime-Value}
\end{align}
and note that $\kldiv(\mu_i,\mu_{i^*}) = \kldiv^+(\mu_i,\mu_{i^*})$. Let
$\ell' = 1 + K_n$.
Then we have:
\begin{align}
& \sum_{s=\ell'}^{n} \Event{ s\kldiv^+(\hat{\mu}_{i,s}, \mu_{i^*}) \le \log(n) + 3 \log(\log(n)) }\nonumber\\
& \le  \sum_{s=K_n+1}^{\infty} \Event{ (K_n+1) \kldiv^+(\hat{\mu}_{i,s}, \mu_{i^*})  \le \log(n) + 3 \log(\log(n)) } \nonumber\\
& \le\sum_{s=K_n +1}^{\infty} \Event{ \kldiv^+(\hat{\mu}_{i,s},\mu_{i^*}) < \dfrac{\kldiv(\mu_i, \mu_{i^*})}{1+\epsilon} }. \label{eq:KL-Proof-After-EllPrime-Result}
 \end{align}

Putting the value of $\ell'$ and inequalities \eqref{eq:KL-Proof-EllPrime-Value} and \eqref{eq:KL-Proof-After-EllPrime-Result} back into \eqref{eq:KL-Proof-Lemma7-Result} and combining with \eqref{eq:General-UCB-Modification}, we get
\begin{align*}
\ExpVal{T_i(n)} & \le  \ \dfrac{1+\epsilon}{\kldiv(\mu_i,\mu_{i^*})} \Bigg( \log(n)+3\log(\log(n)) \Bigg) + \ExpVal{ G_{i,n}^* } + 1 +  \\
& \  \sum_{t=1}^{n} \Prob{ B_{i^*,S_{i^*}(t-1),t} < \mu_{i^*} } + \ExpVal{G^*_{i,n}} \sum_{s=K_n+1}^{\infty} \Prob{ \kldiv^+(\hat{\mu}_{i,s},\mu_{i^*}) < \dfrac{\kldiv(\mu_i, \mu_{i^*})}{1+\epsilon} },
\end{align*}
where the last term is a result of the delays being independent of the rewards.
The first summation can be bounded using Theorem \ref{thm:KL-UCB-Stat-Tool}, for which it suffices to set $\epsilon_t = 1, 1 \le t \le n$, and use the sequence of observed rewards $(h'_{i,t})$ for the arm under consideration as the sequence $(Y_t)$ in the theorem. In the same way as the analysis of \citet{Garivier:2011}, this gives an upper bound of the form $C'_1 \log(\log n)$ with the same value of $C'_1 \le 7$ as in the non-delayed setting. The second summation can be bounded by Lemma \ref{lem:KL-8}. Therefore, the expected number of times a suboptimal prediction is made is bounded by:
\begin{align*}
 \ExpVal{T_i(n)} \le & \ \dfrac{1+\epsilon}{\kldiv(\mu_i,\mu_{i^*})} \Bigg( \log(n)+3\log(\log(n)) \Bigg) + 
C'_1 \log(\log(n)) + \dfrac{C_2(\epsilon)}{n^{\beta(\epsilon)}} \ExpVal{G^*_{i,n}} + \ExpVal{G_{i,n}^*} +1.
\end{align*}
Combining this with \eqref{eq:Stochastic-Exp-Regret} and letting $C_1 = C'_1 + 3$ finishes the proof.
\end{proof}

\end{document}